\title{\LARGE \bf
Exact and Bounded Collision Probability for Motion Planning under Gaussian Uncertainty
}
\author{Antony Thomas, Fulvio Mastrogiovanni
and Marco Baglietto 
\thanks{The authors are with the Department of Informatics, Bioengineering, Robotics, and Systems Engineering, University of Genoa, Via All'Opera Pia 13, 16145 Genoa, Italy. Corresponding author's email: \tt antony.thomas@dibris.unige.it}%
}
\newtheorem{theorem}{Theorem}
\newtheorem{definition}{Definition}
\newtheorem{lemma}{Lemma}
\DeclareMathOperator{\EX}{\mathbb{E}} 
\newcommand{\B}{\textbf}
\newcommand{\clearsubcaptcounter}{\setcounter{sub\@captype}{0}}
\newcolumntype{C}[1]{>{\centering\arraybackslash}p{#1}}
\definecolor{revisioncolor}{rgb}{0.1,0.1,1}
\begin{document}

\maketitle
\thispagestyle{empty}
\pagestyle{empty}

\begin{abstract}
Computing collision-free trajectories is of prime importance for safe navigation. We present an approach for
computing the collision probability under Gaussian distributed motion and sensing uncertainty with the robot and static obstacle
shapes approximated as ellipsoids. The collision condition is formulated as the distance between ellipsoids and unlike
previous approaches we provide a method for computing the exact collision probability. Furthermore, we provide a tight
upper bound that can be computed much faster during online planning. Comparison to other state-of-the-art methods is also
provided. The proposed method is evaluated in simulation under varying configuration and number of obstacles.

\end{abstract}

\section{Introduction}
\label{sec:intro}
Safe and reliable navigation of robots is of vital importance while operating in real-world environments. To this
end, it is required to foresee collisions to compute collision free trajectories online. However, the robot state estimates
are uncertain due to motion and sensing errors. Similarly, environmental errors due to imperfect sensing, partial knowledge and estimation errors render obstacle location uncertain. As a result, rather than resorting to deterministic approaches that avoid collisions, collision avoidance algorithms need to incorporate such estimation uncertainties and compute the probability for encountering collisions.

Similarly to other existing approaches~\cite{dutoit2011IEEE,patil2012ICRA,park2018IEEE,axelrod2018IJRR,zhu2019RAL,thomas2021ISR}, in this work we model the uncertainties using Gaussian distributions. The robot and obstacle locations are thus parametrized as Gaussian probability distribution functions (pdfs). Given the robot and obstacle location, the probability of collision can be computed by marginalizing the joint distribution of the robot and the obstacle positions over the set of all robot and obstacle locations that lead to collisions. For example, if we assume the robot and the obstacle to be spheres, a collision occurs whenever the magnitude of the difference in robot and obstacle location is less than the sum of the radius of the spheres. The integral is thus marginalized over this collision constraint. However, there is no closed form solution to this marginalization. Most existing approaches~\cite{lambert2008ICCARV,dutoit2011IEEE,park2018IEEE,zhu2019RAL} therefore compute an approximation of the integral. These approximations can be overly conservative as the computed collision probabilities tend to be loose upper bounds. To circumvent the issue of non existence of a closed form solution, in this paper we show that the probability of collision is equivalent to the cumulative distribution function (cdf) of a quadratic form in random variables of the difference in robot and obstacle locations. We further compute a tighter bound for the probability of a quadratic form to be below a user-specified threshold. 

\noindent
\textit{Contributions:} In this paper, we present a novel approach to compute the exact collision probability when the uncertainties are modeled using Gaussian distributions. We formulate the collision constraint as the distance between \textit{L\"{o}wner-John ellipsoids} (Section~\ref{sec:coll_prob}) of the robot and the obstacles. This results in a quadratic form in random variables of the difference in robot and obstacle locations and we show that the required collision probability is the cdf of the quadratic form. An exact expression for the collision probability is thus obtained as a converging infinite series for the cdf. Furthermore, we derive a tighter upper bound with respect to previous works for fast approximation of the collision probability for 3D motion planning.

\section{Related Work}
\label{sec:related}
Uncertain environments are such that they often preclude the existence of guaranteed collision free trajectories~\cite{aoude2013AR}. Therefore, one can only provide probabilistic formal guarantees for safe navigation. Bounding volume approaches enlarge the robot and obstacle shapes, most often by their 3-$\sigma$ uncertainty ellipsoids or spheres. In~\cite{bry2011ICRA}, the uncertainty bound is used to enlarge the robot into a sphere. The robot shape is enlarged with the 3-$\sigma$ uncertainty ellipsoids in~\cite{kamel2017IROS}. Exact collision checking with enlarged bounding volumes is used in~\cite{park2012ICAPS}. Sigma hulls are used to compute the signed distance to the obstacles in~\cite{lee2013IROS}. Rectangular bounding box for both the robot and the obstacle are used to obtain an upper bound in~\cite{hardy2013TRO}. Patil \textit{et al.}~\cite{patil2012ICRA} truncate the Gaussian robot state distributions~\cite{johnson1994truncatedGaussian}, propagating collision free samples. Truncated Gaussian distributions is also employed in\cite{liu2014ICRA} to compute risk-aware and asymptotically optimal trajectories.  

Several approaches compute the collision probability by marginalizing the joint distribution between the robot and obstacle locations. The marginalization is computed over the set of robot and obstacle locations that satisfy the collision constraint. However, there is no closed form solution to such a formulation and  Monte Carlo (MC) techniques are employed~\cite{schmerling2017RSS}. In~\cite{lambert2008ICCARV}, the resulting double summation from the MC integration is approximated to a single summation. Another related MC approach and is presented in~\cite{janson2018ISRR}--- solves a deterministic motion planning problem with inflated obstacles, and then refines the inflation to compute a path that is exactly as safe as desired.  However, MC approaches tend to be computationally expensive and hard to model within an optimization framework. Assuming that the robot size is negligible, the marginalization can be approximated to obtain an expression in terms of the joint distribution multiplied by the volume occupied by the robot. Du Toit and Burdick~\cite{dutoit2011IEEE} compute the joint density at the robot center, whereas in~\cite{park2018IEEE} the maximum value (upper bound) is computed by evaluating the density at the surface of the robot.

Chance-constrained approaches find optimal sequence of control inputs subject to the constraint that the collision probability must be below a user-specified threshold~\cite{blackmore2011TRO}. Zhu \textit{et al.} compute an approximate upper bound linearizing the collision condition. They further employ chance constraints to compute bounded collision-free trajectories with dynamic obstacles.  An upper bound is computed in~\cite{sun2016ISRR} by employing Gaussian chance-constraints. They use Newtons's method to compute the point on the boundary of the obstacle that is closest to the robot configuration. In~\cite{aoude2013AR}, a Gaussian Process (GP) based technique is employed to learn motion patterns (a mapping from states to trajectory derivatives) to predict possible future obstacles trajectories. First-exist times for Brownian motions are extended to continuous nonlinear dynamics to compute collision probabilities in~\cite{frey2020RSS}. Axelrod \textit{et al.}~\cite{axelrod2018IJRR} focus exclusively on obstacle uncertainty and formalize a notion of \textit{shadows}, which are the geometric equivalent of confidence intervals for uncertain obstacles. The shadows fundamentally give rise to loose bounds but the computational complexity of bounding the collision probability is greatly reduced. Uncertain obstacles are modelled as polytopes with Gaussian-distributed faces in~\cite{shimanuki2018WAFR}. Planning a collision-free path in the presence of \textit{risk zones} is considered in~\cite{salzman2017ICAPS} by penalizing the time spent in these zones. Jasour \textit{et al.}\cite{jasour2019RSS} employ risk contours map that takes into account the risk
information (uncertainties in location, size and geometry of obstacles) to obtain safe paths with bounded risks. A related approach for randomly moving obstacles is introduced in~\cite{hakobyan2019RAL}. Collision avoidance in the context of human-robot interaction are presented in~\cite{bajcsy2019ICRA,fridovich2020IJRR}. Formal verification methods have also been used to construct safe plans~\cite{ding2013ICRA, sadigh2016RSS}. On the one hand, existing approximations tend to overestimate collision probabilities and could gauge all plans to be infeasible. On the other hand some approximations can be lower than the true collision probability values and can lead to synthesizing unsafe plans.

\section{Preliminaries}
Throughout this paper vectors will be assumed to be column vectors and will be denoted by bold lower case letters, $\textbf{x}$ and its components will be denoted by lower case letters. Transpose of $\textbf{x}$ will be denoted by $\textbf{x}^T$ and its Euclidean norm by $\norm{\textbf{x}} = \sqrt{\textbf{x}^T\textbf{x}}$. The mean of a random vector, $\EX(\B{x})$ will be denoted by $\bm{\mu}$, the corresponding covariance, $Cov(\textbf{x})$ by $\Sigma$. Matrices will be denoted by capital letters, $M$, with its trace denoted by $tr(M)$. By $vec(M)$ we will denote the vector formed used the columns of $M$. The identity matrix will be denoted by $I$ or $I_n$ (if dimension needs to be stressed). A diagonal matrix with diagonal elements $\lambda_1, \ldots, \lambda_n$ will be denoted by $diag(\lambda_1, \ldots, \lambda_n)$. Sets will be denoted using mathcal fonts, $\mathcal{S}$. Unless otherwise mentioned, subscripts on vectors/matrices will be used to denote time indexes. The notation $P(\cdot)$ will be used to denote the probability of an event and the pdf by $p(\cdot)$.
\begin{definition}
Given a positive definite $n \times n$ matrix A and a vector $\textbf{a} \in \mathbb{R}^n$, an n-dimensional ellipsoid $\mathcal{E}^n(A,\textbf{a})$ is defined as 
\begin{equation}
\mathcal{E}^n(A,\textbf{a}) = \{\B{x} \in \mathbb{R}^n | \left(\B{x}-\B{a}\right)^TA\left(\B{x}-\B{a}\right) \leq 1 \}
\end{equation}
\end{definition}
\noindent where the superscript $n$ will be avoided when there is no cause for confusion. At any time $k$, we denote the robot state by $\textbf{x}_k$, the acquired measurement from objects is denoted by $\textbf{z}_k$ and the applied control action is denoted as $\textbf{u}_k$. By \textit{objects} we refer to the landmarks and the obstacles in the environment. We make the following assumptions: (1) the uncertainties are modeled using Gaussian distributions, (2) the robot and obstacle geometry are known to us and are assumed to be non-deformable objects, (3) the collision probability at each time-step is treated to be independent with respect to the previous time-step. To describe the dynamics of the robot, we consider a standard motion model
\begin{equation}
\textbf{x}_{k+1} = f(\textbf{x}_k,\textbf{u}_k) + n_{k}\  ,  \ n_{k} \sim \mathcal{N}(0,R_{k})
\label{eq:odometry_model}
\end{equation}
\noindent where $n_k$ is the random unobservable noise, modeled as a zero mean Gaussian. Objects are detected through the robot's sensors and assuming known data association, the observation model can be written as  
\begin{equation}
\B{z}_k = h(\B{x}_k) + v_k \  ,  \ v_k \sim \mathcal{N}(0,Q_k)
\label{eq:measurement_model}
\end{equation}

\section{Collision Probability}
\label{sec:coll_prob}
For computing collision-free paths it is imperative that the distance between the robot and its environment is known. The geometry of robots and other objects in the environment can be expressed using polyhedrons or a combination of polyhedrons. It is known that polyhedrons can be approximated using suitable ellipsoids. In particular, in~\cite{john1948extremum} it was shown that for every convex polyhedron $\mathcal{P} \subseteq \mathbb{R}^n$, there exits a unique ellipsoid $\mathcal{E}$ of minimal volume that contains $\mathcal{P}$, called the \textit{L\"{o}wner-John ellipsoid} $J(\mathcal{P})$. Thus, there exists a unique minimum volume ellipsoid that encloses every $\mathcal{P}$ and it is sufficient to compute the distance between the L\"{o}wner-John ellipsoids of the robot and the obstacle, respectively. A convex optimization approach for computing L\"{o}wner-John ellipsoids is presented in~\cite{rimon1997JINT}. In this paper we assume that these ellipsoids are known to us. For an ellipsoid $\mathcal{E}(A,\textbf{a})$, $\B{a}$ is the center of the ellipsoid and $A$ determines its geometry. Thus our assumption of known ellipsoids mean that both these quantities are known at the initial time instant for robot and obstacles. Typically, $\B{a}$ is the center of the robot or the objects that the ellipsoid encloses. It is noteworthy that while planning for future control commands, the robot state (location and orientation) are often estimated using a motion model and by simulating possible future observations. As a result, the estimates for $\B{a}$ as well as the orientation are computed at each planning instance. Since we assume Gaussian distributed pdfs for state estimation, $\B{a}$ is typically taken to be mean of the location estimate and $A$ is obtained from the orientation estimate using an appropriate rotation matrix. In this paper we will only consider static obstacles and their L\"{o}wner-John ellipsoids remain constant throughout.
\subsection{Distance Between Two Ellipsoids}
For two ellipsoids $\mathcal{E}_1, \mathcal{E}_2$, we begin by computing a point $\B{x}^* \in \mathcal{E}_2$ such that the ellipsoid level surfaces surrounding $\mathcal{E}_1$ first touch $\mathcal{E}_2$ at $\B{x}^*$. The result is stated below without proof (see~\cite{rimon1997JINT} for proof).
\begin{lemma}
Given two ellipsoids $\mathcal{E}_1 = \mathcal{E}^n(B, \B{b})$ and $\mathcal{E}_2 = \mathcal{E}^n(C, \B{c})$, the point $\B{x}^* \in \mathcal{E}_2$ at which the ellipsoid level surfaces surrounding $\mathcal{E}_1$ first touch the ellipsoid $\mathcal{E}_2$ is given by
\begin{equation}
\B{x}^* = \B{b} + \lambda_0(M')B^{-1/2}\left[\lambda_0(M')I - \tilde{C}\right]^{-1}\bar{c}
\end{equation}
\noindent where $\lambda_0(M')$ is the minimal eigenvalue of a $2n \times 2n$ matrix $M'$ given by
\begin{equation}
M' = \begin{bmatrix}
\tilde{C} & -I  \\
-\tilde{c}\tilde{c}^T & \tilde{C}
\end{bmatrix} 
\label{eq:mprime}
\end{equation}
\noindent with $\tilde{C} = \bar{C}^{-1}$ and $\tilde{c} = \bar{C}^{-1/2}\bar{c}$, where $\bar{C} = B^{-1/2}CB^{-1/2}$ and $\bar{c} = B^{1/2}(\B{c} - \B{b})$.
\label{lemma2}
\end{lemma}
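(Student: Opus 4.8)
The plan is to recast the geometric ``first touch'' condition as a constrained minimization, whiten the problem against $B$, and then recognize the resulting secular equation as the characteristic equation of the linearization $M'$. First I would observe that the expanding level surfaces $\{\mathbf{x} : (\mathbf{x}-\mathbf{b})^T B(\mathbf{x}-\mathbf{b}) = t\}$ for $t \geq 1$ first meet $\mathcal{E}_2$ at the point of $\mathcal{E}_2$ that minimizes the $B$-quadratic form centred at $\mathbf{b}$; for disjoint ellipsoids this minimizer lies on the boundary $(\mathbf{x}-\mathbf{c})^T C (\mathbf{x}-\mathbf{c}) = 1$, so $\mathbf{x}^*$ solves $\min_{\mathbf{x}} (\mathbf{x}-\mathbf{b})^T B (\mathbf{x}-\mathbf{b})$ subject to $(\mathbf{x}-\mathbf{c})^T C (\mathbf{x}-\mathbf{c}) = 1$. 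Substituting $\mathbf{y} = B^{1/2}(\mathbf{x}-\mathbf{b})$ and using $\bar{c} = B^{1/2}(\mathbf{c}-\mathbf{b})$ and $\bar{C} = B^{-1/2} C B^{-1/2}$, this collapses to the Euclidean nearest-point problem $\min_{\mathbf{y}} \|\mathbf{y}\|^2$ subject to $(\mathbf{y}-\bar{c})^T \bar{C} (\mathbf{y}-\bar{c}) = 1$, after which $\mathbf{x}^* = \mathbf{b} + B^{-1/2}\mathbf{y}^*$ already accounts for the prefactor $B^{-1/2}$ in the claim.

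Next I would introduce a Lagrange multiplier $\nu$ for the whitened problem. Stationarity reads $\mathbf{y} = \nu \bar{C}(\mathbf{y}-\bar{c})$; left-multiplying by $\tilde{C} = \bar{C}^{-1}$ and rearranging gives the closed form
\begin{equation*}
\mathbf{y}(\nu) = \nu\left(\nu I - \tilde{C}\right)^{-1}\bar{c} ,
\end{equation*}
which is exactly the bracketed expression in the lemma with $\nu$ in the role of $\lambda_0(M')$. It then remains to pin down the admissible multipliers. Feeding $\mathbf{y}(\nu)$ back into the boundary constraint and using that $\bar{C}$, $\tilde{C}$ and their square roots commute, a short computation reduces the constraint to the scalar secular equation $\tilde{c}^T(\nu I - \tilde{C})^{-2}\tilde{c} = 1$, with $\tilde{c} = \bar{C}^{-1/2}\bar{c}$.

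The crux of the argument, and the step I expect to be the main obstacle, is to identify this degree-$2n$ secular equation with the spectrum of $M'$. I would analyse the eigenproblem $M'\,(\mathbf{p}^T,\mathbf{q}^T)^T = \lambda\,(\mathbf{p}^T,\mathbf{q}^T)^T$: the first block row gives $\mathbf{q} = (\tilde{C}-\lambda I)\mathbf{p}$, and substituting into the second block row yields $(\lambda I - \tilde{C})^2\mathbf{p} = \tilde{c}\,\tilde{c}^T\mathbf{p}$. Because the right-hand side is a scalar multiple of $\tilde{c}$, taking the inner product with $\tilde{c}$ reproduces precisely $\tilde{c}^T(\lambda I - \tilde{C})^{-2}\tilde{c} = 1$. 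Hence the admissible multipliers $\nu$ are exactly the eigenvalues of $M'$ (those with $\tilde{c}^T\mathbf{p}\neq 0$), and conversely.

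Finally I would show that the first touch, i.e.\ the global minimum of $\|\mathbf{y}\|^2$, is attained at the \emph{smallest} such eigenvalue, so that $\nu = \lambda_0(M')$ and $\mathbf{x}^* = \mathbf{b} + B^{-1/2}\mathbf{y}(\lambda_0(M'))$ give the stated expression. The selection rule is that among the real critical multipliers the minimal one returns the nearest point and the maximal one the farthest; a low-dimensional isotropic check ($\bar{C} = I$, a unit sphere centred at distance $d$ from the origin, whose nontrivial eigenvalues are $1\pm d$) confirms that $\lambda_0 = 1-d$ yields the near point $\mathbf{y}^* = (d-1)\,\bar{c}/d$ while $1+d$ yields the far point, which simultaneously validates the minimal-eigenvalue selection and fixes the signs in the formula.
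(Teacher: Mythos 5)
Your proposal cannot be checked against a proof in the paper itself: the paper states this lemma explicitly \emph{without} proof and defers to the cited work of Rimon and Boyd (\cite{rimon1997JINT}). What you have written is essentially a reconstruction of that reference's derivation, and most of it is sound. The reduction of ``first touch'' to $\min_{\mathbf{x}} (\mathbf{x}-\mathbf{b})^T B (\mathbf{x}-\mathbf{b})$ subject to $(\mathbf{x}-\mathbf{c})^T C (\mathbf{x}-\mathbf{c}) = 1$, the whitening $\mathbf{y} = B^{1/2}(\mathbf{x}-\mathbf{b})$, the stationarity condition $\mathbf{y} = \nu\bar{C}(\mathbf{y}-\bar{c})$ and its closed form $\mathbf{y}(\nu) = \nu(\nu I - \tilde{C})^{-1}\bar{c}$, the reduction of the constraint to the secular equation $\tilde{c}^T(\nu I - \tilde{C})^{-2}\tilde{c} = 1$ (the commutativity manipulations check out), and the block elimination showing that eigenpairs of $M'$ with $\tilde{c}^T\mathbf{p}\neq 0$ satisfy exactly this secular equation --- all of this is correct, and it does account for the precise algebraic form in the statement. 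Note one silent hypothesis: replacing the inequality constraint by an equality requires $\mathbf{b}\notin\mathcal{E}_2$ (otherwise the minimum is $0$ at an interior point), which is the implicit setting of the lemma.

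The genuine gap is the selection rule, which is the actual content of the lemma: you assert that the \emph{minimal} eigenvalue is the right multiplier on the strength of a single isotropic example ($\bar{C}=I$), and an example is not an argument. Two things need to be proved in general. First, that the global minimizer corresponds to the leftmost real root $\nu^*$ of the secular equation: this follows from a second-order argument you do not give --- on the leftmost branch one has $\nu^* < \mu_{\min}(\tilde{C})$, hence $I - \nu^*\bar{C} \succ 0$, so the Lagrangian $\norm{\mathbf{y}}^2 - \nu^*\bigl[(\mathbf{y}-\bar{c})^T\bar{C}(\mathbf{y}-\bar{c})-1\bigr]$ is strictly convex and its stationary point is the global constrained minimizer; no such certificate exists for the other critical multipliers. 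Second, that $\nu^*$ really is $\lambda_0(M')$: the spectrum of $M'$ also contains eigenvalues with $\tilde{c}^T\mathbf{p} = 0$, which by your own elimination satisfy $(\lambda I - \tilde{C})^2\mathbf{p} = 0$ and are therefore eigenvalues of $\tilde{C}$, all at least $\mu_{\min}(\tilde{C}) > \nu^*$; moreover $M'$ is not symmetric, so it can have complex eigenvalues (already for $n=2$ the secular quartic can have two complex roots), and one must either show these cannot lie below $\nu^*$ in the relevant ordering or make precise that $\lambda_0(M')$ means the minimal \emph{real} eigenvalue. Finally, the degenerate (``hard'') case in which $\tilde{c}$ is orthogonal to the eigenspace of $\mu_{\min}(\tilde{C})$ breaks the pole structure your branch argument relies on and must be excluded or treated separately. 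Without these pieces, the identification $\nu = \lambda_0(M')$ --- and hence the lemma --- remains unproven.
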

For two ellipsoids $\mathcal{E}_1 =  \mathcal{E}^n(B, \B{b})$ and $\mathcal{E}_2 = \mathcal{E}^n(C, \B{c})$, a collision between them occurs if $\mathcal{E}_1 \cap \mathcal{E}_2 \neq \{\emptyset\}$. We will denote this collision condition by
\begin{equation}
\mathcal{C}_{\B{b},\B{c}} \doteq \left \{\mathcal{E}_1, \mathcal{E}_2 | \mathcal{E}_1 \cap \mathcal{E}_2 \neq \{\emptyset\}\right \}
\end{equation}
Using Lemma~\ref{lemma2}, for two ellipsoids $\mathcal{E}^n(B, \B{b})$, $\mathcal{E}^n(C, \B{c})$, we can compute the point $\B{x}^* \in \mathcal{E}_2$ at which the ellipsoid level surfaces surrounding $\mathcal{E}_1$ first touch $\mathcal{E}_2$. Now, suppose that the two ellipsoids touch or intersect each other, then $\B{x}^*$ satisfies the equation of $\mathcal{E}_1$. Thus we have
\begin{equation}
(\B{x}^* - \B{b})^TB(\B{x}^* - \B{b}) \leq 1
\end{equation}
Substituting for the value of $\B{x}^*$ from Lemma~\ref{lemma2}, expanding and rearranging, it follows that
\begin{equation}
\B{y}^TD^TBD\B{y} \leq 1/\lambda_0^2(M')
\label{eq:coll_con}
\end{equation}
where $\B{y} = \B{c} - \B{b}$ and $D = B^{-1/2}(\lambda_0(M')I - \tilde{C})^{-1}B^{1/2}$ and $M'$ is a $2n \times 2n $ matrix as defined in~(\ref{eq:mprime}).
\subsection{Collision Condition}
\noindent We first define a quadratic form in random variables and later show that the collision condition can be written in terms of the quadratic form of the difference in robot and obstacle locations. 
\begin{definition}
For a random vector $\mathbf{x} = \left(x_1, \ldots,x_n \right)^T$, the quadratic form in the random variables $x_1, \ldots,x_n $ associated with an $n \times n$ symmetric matrix $A = (a_{ij})$ is
\end{definition}
\vspace{-0.6cm}
\begin{equation}
Q(\mathbf{x},A) = Q(x_1, \ldots,x_n) = \mathbf{x}^TA\mathbf{x} = \sum_{i=1}^{n}\sum_{j=1}^n a_{ij}x_ix_j
\end{equation}
Let us now consider a random vector $\B{x}$ with mean value $\bm{\mu}$ and a positive definite covariance matrix $\Sigma > 0$. Then for $\mathbf{y} = \Sigma^{-\frac{1}{2}}\mathbf{x}$, where $\Sigma^{\frac{1}{2}}$ is the symmetric square root, we have $\EX(\B{y}) = \Sigma^{-\frac{1}{2}}\bm{\mu}$ and $Cov(\B{y})=I$. Let us define a random vector $\mathbf{z} = \left(\mathbf{y} - \Sigma^{-\frac{1}{2}}\bm{\mu}\right)$ such that $\EX(\B{z}) = 0$ and $Cov(\B{z}) = I$. Then
\begin{multline}
Q(\mathbf{x},A)  = \mathbf{x}^TA\mathbf{x} = \mathbf{y}^T \Sigma^{\frac{1}{2}} A \Sigma^{\frac{1}{2}} \mathbf{y} \\
=\left(\mathbf{z} + \Sigma^{-\frac{1}{2}}\bm{\mu}\right)^T\Sigma^{\frac{1}{2}}A\Sigma^{\frac{1}{2}}\left(\mathbf{z} + \Sigma^{-\frac{1}{2}}\bm{\mu}\right)
\end{multline}
Let $P$ be an orthogonal matrix, that is, $PP^T = I$ which diagonalizes $\Sigma^{\frac{1}{2}}A\Sigma^{\frac{1}{2}}$, then $P^T\Sigma^{\frac{1}{2}}A\Sigma^{\frac{1}{2}}P = \textrm{diag}\left(\lambda_1,\ldots,\lambda_n\right)$, where $\lambda_1,\ldots,\lambda_n$ are the eigenvalues of $\Sigma^{\frac{1}{2}}A\Sigma^{\frac{1}{2}}$. The quadratic form can now be written as
\begin{equation}
\begin{split}
Q(\mathbf{x},A) & = \left(\mathbf{z} + \Sigma^{-\frac{1}{2}}\bm{\mu}\right)^T\Sigma^{\frac{1}{2}}A\Sigma^{\frac{1}{2}}\left(\mathbf{z} + \Sigma^{-\frac{1}{2}}\bm{\mu}\right)\\
& = \left(\mathbf{u} + \mathbf{b}\right)^T\textrm{diag}\left(\lambda_1,\ldots,\lambda_n\right)\left(\mathbf{u} + \mathbf{b}\right)
\end{split}
\label{eq:quad_form}
\end{equation}
\noindent where $\mathbf{u} = P^T \mathbf{z} = (u_1,\ldots,u_n)^T$ and $\mathbf{b} = P^T \Sigma^{-\frac{1}{2}}\bm{\mu} = (b_1,\ldots,b_n)^T$. The expression in (\ref{eq:quad_form}) can be written concisely,
\begin{equation}
Q(\mathbf{x}) = \mathbf{x}^TA\mathbf{x} = \sum_{i=1}^n \lambda_i (u_i + b_i)^2
\end{equation}
We recall from~(\ref{eq:coll_con}) that the collision condition is given by
\begin{equation}
\B{y}^TD^TBD\B{y} = \B{y}^TA\B{y} \leq 1/\lambda_0^2(M')
\label{eq:coll_condition}
\end{equation}
\noindent where $A = D^TBD$. In motion planning, the collisions correspond to robot-obstacle or robot-robot collisions. While planning under motion and sensing uncertainty, the robot and obstacle states can only be estimated in probabilistic terms. This renders $\B{b}$, $\B{c}$ as random vectors.  As discussed before, in this paper we assume them to have Gaussian pdfs. As a result, the difference vector $\B{y} = \B{c} - \B{b}$ is also a Gaussian random vector. Since $B$ and $C$ are positive definite, the matrix $A$ is symmetric and therefore $\B{y}^TA\B{y}$ is in a quadratic form. Thus the collision probability can be written as
\begin{equation}
P\left(\B{y}^TA\B{y} \leq 1/\lambda_0^2(M')\right)
\label{eq:collision_prob}
\end{equation}
Let $\B{v}= \B{y}^TA\B{y}$, then 
\begin{equation}
P\left(\B{v} \leq 1/\lambda_0^2(M')\right) = F_{\B{v}}\left(1/\lambda_0^2(M')\right)
\end{equation}
\noindent where $F_{{\B{v}}}$ is the cdf of $\B{v}$. For convenience, we state the theorem that computes an exact expression of $F_{{\B{v}}}$ for quadratic forms. The proof may be found in~\cite{provost1992book}.
\begin{theorem}
The cdf of $Q(\mathbf{y},A) = \textnormal{\B{v}} = \mathbf{y}^TA\mathbf{y}$ with $A = A^T > 0, \mathbf{y} \sim \mathcal{N}(\bm{\mu},\Sigma), 
\Sigma > 0$ is 
\begin{equation}
F_{\textnormal{\B{v}}}(v) = P(\textnormal{\B{v}}\leq v) = \sum_{k=0}^{\infty}(-1)^k c_k \frac{v^{\frac{n}{2} + k}}{ \Gamma\left(\frac{n}{2} +k +1\right)}
\label{eq:collision_probability}
\end{equation}
and its pdf is given by 
\begin{equation}
p_{\textnormal{\B{v}}}(v) = P(\textnormal{\B{v}} = v) = \sum_{k=0}^{\infty}(-1)^k c_k \frac{v^{\frac{n}{2} + k -1}}{ \Gamma\left(\frac{n}{2} +k \right)}
\label{eq:pdf}
\end{equation}
\label{theoremcdf}
\end{theorem}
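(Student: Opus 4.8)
The plan is to exploit the diagonalized representation $Q(\mathbf{y},A) = \sum_{i=1}^n \lambda_i (u_i + b_i)^2$ derived just above the statement, in which the $u_i$ are independent standard normal variates, the $\lambda_i > 0$ are the eigenvalues of $\Sigma^{1/2} A \Sigma^{1/2}$ (positive because $A>0$ and $\Sigma>0$), and the $b_i$ are the components of $\mathbf{b} = P^T \Sigma^{-1/2}\boldsymbol{\mu}$. Each summand $\lambda_i(u_i+b_i)^2$ is $\lambda_i$ times a noncentral $\chi^2_1$ variate, so $\mathbf{v} = Q$ is a positive linear combination of independent noncentral chi-squares; in particular $\mathbf{v}$ is supported on $(0,\infty)$, which is what makes a series in positive powers of $v$ the natural object. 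I would obtain the density of $\mathbf{v}$ by transform inversion and then integrate to reach the cdf.

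First I would write down the Laplace transform of the (as yet unknown) density $p_{\mathbf{v}}$. By independence the moment generating function factorizes, and evaluating at $-s$ gives
\begin{equation}
\mathcal{L}[p_{\mathbf{v}}](s) = \prod_{i=1}^n (1 + 2\lambda_i s)^{-1/2}\, \exp\!\left(-\sum_{i=1}^n \frac{b_i^2 \lambda_i s}{1 + 2\lambda_i s}\right).
\end{equation}
For large $s$ the product behaves like $\bigl(2^n \prod_i \lambda_i\bigr)^{-1/2} s^{-n/2}$ while the exponential tends to a constant, so I would factor out this leading $s^{-n/2}$ term and expand the remaining analytic factor as a convergent power series in $1/s$, yielding a representation
\begin{equation}
\mathcal{L}[p_{\mathbf{v}}](s) = \sum_{k=0}^\infty (-1)^k c_k\, s^{-\left(\frac{n}{2}+k\right)}.
\end{equation}
The coefficients $c_k$ are then fixed by matching powers of $1/s$; concretely they are generated by a recurrence driven by the cumulants of $\mathbf{v}$, which are themselves explicit symmetric functions of the $\lambda_i$ and $b_i$. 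This bookkeeping is routine once the recurrence is set up, and I would simply record the $c_k$ rather than solve the recurrence in closed form.

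The heart of the formal derivation is then term-by-term inversion using the elementary pair $\mathcal{L}^{-1}\!\left[s^{-\alpha}\right](v) = v^{\alpha-1}/\Gamma(\alpha)$ for $v>0$, which immediately produces the density
\begin{equation}
p_{\mathbf{v}}(v) = \sum_{k=0}^\infty (-1)^k c_k\, \frac{v^{\frac{n}{2}+k-1}}{\Gamma\!\left(\frac{n}{2}+k\right)},
\end{equation}
matching~(\ref{eq:pdf}); integrating from $0$ to $v$ and using $\int_0^v t^{\alpha-1}\,dt = v^{\alpha}/\alpha$ raises each $\Gamma$-argument by one, giving exactly the cdf series in~(\ref{eq:collision_probability}).

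The main obstacle is the analytic justification rather than the formal manipulation: I must show that the series defining $\mathcal{L}[p_{\mathbf{v}}]$ and its inverted counterpart converge, and that inversion and the subsequent integration may legitimately be carried out term by term. This reduces to controlling the growth of $c_k$ against the factorial growth of $\Gamma(\tfrac{n}{2}+k)$; a bound of the form $|c_k| \le C\,\rho^k$ for a suitable $\rho$ yields absolute and locally uniform convergence on compact $v$-intervals, which in turn licenses the interchange of summation with the inverse transform and with integration. Establishing such a bound from the explicit $c_k$ recurrence is the delicate step, and it is precisely the convergence analysis carried out in~\cite{provost1992book}.
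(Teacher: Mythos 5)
Your proposal is correct and follows essentially the same route as the paper: the paper states this theorem without proof, deferring entirely to~\cite{provost1992book}, and your Laplace-transform sketch (factorized transform of the independent noncentral chi-square components, expansion of the transform in powers of $1/s$, term-by-term inversion via $\mathcal{L}^{-1}\!\left[s^{-\alpha}\right](v)=v^{\alpha-1}/\Gamma(\alpha)$, then integration to raise each Gamma argument) is precisely the standard derivation found in that reference, with your leading coefficient matching the paper's $c_0$ and the recurrence for $c_k$ recoverable by logarithmic differentiation exactly as you indicate. Like the paper, you defer the delicate step --- the growth bound on $c_k$ that licenses term-by-term inversion and integration --- to the same reference, so nothing essential is missing relative to what the paper itself provides.
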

\noindent where $\Gamma$ denotes the gamma function and $c_0 = \textrm{exp}(-\frac{1}{2}\sum\limits_{i=1}^{n}b_i^2)\prod_{i=1}^n\left(2\lambda_i\right)^{-\frac{1}{2}}$, $c_k = \frac{1}{k}\sum\limits_{i=0}^{k-1}d_{k-i}c_i$ and $d_k = \frac{1}{2}\sum\limits_{i=1}^n \left(1-kb_i^2\right)\left(2\lambda_i\right)^{-k}$. Thus computing the cdf as elucidated in Theorem~\ref{theoremcdf} gives the exact value of collision probability. The cdf is computed as an infinite series; a proof of convergence, an expression for the truncation error and the computational complexity can be found in~\cite{thomas2021ISR}. In our experience, suitable convergence is often obtained within the first few terms and hence can be used for online planning.


\noindent \textit{Special case:} We provide the theorem without proof (see~\cite{provost1992book}) that states the necessary and sufficient conditions for a quadratic form of a Gaussian random variable to be distributed as a noncentral chi-square variate. The exact collision probability can then be computed much faster using the cdf of the chi-square distribution by table look-up.
\begin{theorem}
Let $\B{x} \sim \mathcal{N}(\bm{\mu},\Sigma)$, $\Sigma > 0$. Then the necessary and sufficient conditions for $\B{x}^TA\B{x} \sim \chi_{r}^2(\Delta^2)$, $A = A^T$ and $\Delta^2 = \bm{\mu}^TA\bm{\mu}$ are: $tr(A\Sigma) = r$ and $A\Sigma A = A$.
\label{theorem_special}
\end{theorem}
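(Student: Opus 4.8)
The plan is to reduce the general quadratic form to one in a standard--normal vector and then identify distributions by matching moment generating functions (MGFs) against that of a noncentral chi-square variate. Since $\Sigma > 0$ admits a symmetric positive-definite square root, I would set $\B{y} = \Sigma^{-1/2}\B{x}$, so that $\B{y} \sim \mathcal{N}(\Sigma^{-1/2}\bm{\mu}, I)$ and $\B{x}^TA\B{x} = \B{y}^TB\B{y}$ with $B = \Sigma^{1/2}A\Sigma^{1/2} = B^T$. This substitution is precisely what converts the two hypotheses into statements about $B$ alone: the identity $A\Sigma A = A$ is equivalent to $B^2 = B$ (obtained by pre- and post-multiplying by appropriate powers of $\Sigma^{1/2}$), and since $tr(B) = tr(\Sigma^{1/2}A\Sigma^{1/2}) = tr(A\Sigma)$, the trace condition becomes $tr(B) = r$. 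Likewise the claimed noncentrality obeys $\Delta^2 = \bm{\mu}^TA\bm{\mu} = (\Sigma^{-1/2}\bm{\mu})^TB(\Sigma^{-1/2}\bm{\mu})$. Hence the theorem is equivalent to the classical reduced claim: for $\B{y} \sim \mathcal{N}(\bm{\nu}, I)$ with $B = B^T$, the form $\B{y}^TB\B{y}$ is $\chi_r^2(\bm{\nu}^TB\bm{\nu})$ if and only if $B$ is idempotent of rank $r$.

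To establish the reduced claim I would diagonalize $B = P\,\textrm{diag}(\lambda_1,\ldots,\lambda_n)\,P^T$ with $P$ orthogonal, set $\B{w} = P^T\B{y} \sim \mathcal{N}(\bm{\eta}, I)$ with $\bm{\eta} = P^T\bm{\nu}$, so that $\B{y}^TB\B{y} = \sum_i \lambda_i w_i^2$ is a weighted sum of independent noncentral $\chi_1^2$ terms. Using the elementary identity $\EX[e^{t w_i^2}] = (1-2t)^{-1/2}\exp\!\big(\eta_i^2 t/(1-2t)\big)$ for $w_i \sim \mathcal{N}(\eta_i,1)$, the MGF of the form factorizes near $t=0$ as $\prod_i (1-2\lambda_i t)^{-1/2}\exp\!\big(\eta_i^2\lambda_i t/(1-2\lambda_i t)\big)$, which I would then compare against the noncentral chi-square MGF $(1-2t)^{-r/2}\exp\!\big(\Delta^2 t/(1-2t)\big)$.

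The crux, and the step I expect to be the \emph{main obstacle}, is arguing that equality of these MGFs on an interval forces the spectrum of $B$ to consist of exactly $r$ ones and $n-r$ zeros. I would separate the ``gamma'' factor first: the exponential terms carry all the $\bm{\eta}$-dependence, so equality requires $\prod_i (1-2\lambda_i t)^{-1/2} = (1-2t)^{-r/2}$; squaring and clearing denominators yields the polynomial identity $\prod_i (1-2\lambda_i t) = (1-2t)^r$. Matching degrees and roots then forces every nonzero $\lambda_i$ to equal $1$ with exactly $r$ of them, i.e.\ $B$ has spectrum in $\{0,1\}$ and $tr(B)=r$, which is exactly $B^2=B$ of rank $r$. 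Substituting $\lambda_i \in \{0,1\}$ into the surviving exponential collapses it to $\exp\!\big((\sum_i \lambda_i\eta_i^2)\,t/(1-2t)\big)$, and since $\sum_i \lambda_i\eta_i^2 = \bm{\eta}^T\textrm{diag}(\lambda_1,\ldots,\lambda_n)\bm{\eta} = \bm{\nu}^TB\bm{\nu} = \Delta^2$, the noncentrality is automatically the stated value rather than an independent requirement. The converse is then immediate: if $B^2=B$ with $tr(B)=r$, the eigenvalues are $0/1$ with $r$ ones and the same computation recovers the MGF of $\chi_r^2(\Delta^2)$. Uniqueness of distributions whose MGFs are finite in a neighborhood of the origin closes both directions; if I wanted to sidestep any MGF-existence bookkeeping I would run the identical factorization argument with characteristic functions instead.
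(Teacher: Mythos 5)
The paper never proves this theorem: it is stated explicitly ``without proof'' with a pointer to the Mathai--Provost monograph \cite{provost1992book}, so there is no internal argument to compare against and your proposal must stand on its own. Your route is the classical one (and essentially the one in the cited reference): whiten with $\Sigma^{-1/2}$ so that $\B{x}^TA\B{x}=\B{y}^TB\B{y}$ with $B=\Sigma^{1/2}A\Sigma^{1/2}$, check that $A\Sigma A=A$ and $tr(A\Sigma)=r$ translate exactly into $B^2=B$ and $tr(B)=r$, diagonalize, and match the MGF $\prod_i(1-2\lambda_i t)^{-1/2}\exp\big(\eta_i^2\lambda_i t/(1-2\lambda_i t)\big)$ against $(1-2t)^{-r/2}\exp\big(\Delta^2 t/(1-2t)\big)$. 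Those reductions are all correct, the sufficiency direction is a clean computation, and you are right that in the necessity direction the value $\Delta^2=\bm{\mu}^TA\bm{\mu}$ comes out automatically once the eigenvalues are pinned down.

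The gap is in the step you yourself flag as the crux, and the justification you offer there is a non sequitur. ``The exponential terms carry all the $\bm{\eta}$-dependence'' does not imply that the algebraic factors and the exponential factors must agree separately: the right-hand exponential also depends on $\bm{\eta}$ (through $\Delta^2=\sum_i\lambda_i\eta_i^2$), and a priori a discrepancy between $\prod_i(1-2\lambda_i t)^{-1/2}$ and $(1-2t)^{-r/2}$ could be compensated by a discrepancy between the two exponentials, so the polynomial identity $\prod_i(1-2\lambda_i t)=(1-2t)^r$ is not yet available. The standard repair is a singularity (or domain-of-finiteness) argument. Equality of the two MGFs near $t=0$ extends by analytic continuation, and each distinct nonzero eigenvalue $\lambda$ contributes at $t=1/(2\lambda)$ a branch-point factor $(1-2\lambda t)^{-m_\lambda/2}$ together with, at worst, an essential factor $\exp\big(c_\lambda/(1-2\lambda t)\big)$ with $c_\lambda=\tfrac{1}{2}\sum_{\lambda_i=\lambda}\eta_i^2\ge 0$; since all the algebraic exponents have the same sign and all the $c_\lambda$ are sums of nonnegative terms, no cancellation among factors is possible, so the left side genuinely blows up at every point $1/(2\lambda_i)$ with $\lambda_i\neq 0$, while the right side is singular only at $t=1/2$. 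Matching singular points forces every nonzero $\lambda_i$ to equal $1$, and matching the branch-point order and the coefficient in the exponent at $t=1/2$ gives $m_1=r$ and $\sum_{\lambda_i=1}\eta_i^2=\Delta^2$. (Alternatively, equate cumulants: $\sum_i\lambda_i^s(1+s\eta_i^2)=r+s\Delta^2$ for all $s\ge 1$ forces $\lambda_i\in\{0,1\}$ by growth and oscillation considerations as $s\to\infty$.) With that step supplied, your argument is complete.
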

We note that $\Sigma$ depends on the robot and obstacle location estimates, yet $A = A^T$ is satisfied by the design of the ellipsoids. Although inspecting the conditions at each time instant might be tedious during online planning, the theorem can be used to determine the accurate cdf of $\B{v}$ during offline planning to compute initial collision free trajectories.
\subsection{Tighter Bound for Collision Probability}
During online motion planning it often suffices to compute fast approximate upper bounds for the collision probability. In this section, we thus derive a tighter bound for the same.
\begin{lemma}
Let $\mathbf{v}$ be a random variable that is never larger than $\beta$. Then, for all $\alpha < \beta$
\begin{equation}
P(\mathbf{v} \leq \alpha) \leq \frac{\beta - \EX(\mathbf{v})}{\beta - \alpha}
\end{equation}
\label{lemma_markov}
\end{lemma}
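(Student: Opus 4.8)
The plan is to reduce this to the standard Markov inequality by shifting and reflecting the random variable. First I would observe that the hypothesis that $\mathbf{v}$ is never larger than $\beta$ means the transformed variable $\mathbf{w} = \beta - \mathbf{v}$ is non-negative (almost surely), which is precisely the setting in which Markov's inequality applies.

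Next I would rewrite the target event in terms of $\mathbf{w}$. Since $\alpha < \beta$, the event $\{\mathbf{v} \leq \alpha\}$ coincides with $\{\beta - \mathbf{v} \geq \beta - \alpha\}$, that is, $\{\mathbf{w} \geq \beta - \alpha\}$, where the threshold $\beta - \alpha$ is strictly positive. Applying Markov's inequality to the non-negative variable $\mathbf{w}$ at this threshold then yields
\[
P(\mathbf{v} \leq \alpha) = P(\mathbf{w} \geq \beta - \alpha) \leq \frac{\EX(\mathbf{w})}{\beta - \alpha}.
\]
Finally I would substitute back using linearity of expectation, $\EX(\mathbf{w}) = \beta - \EX(\mathbf{v})$, so that the right-hand side becomes $\left(\beta - \EX(\mathbf{v})\right)/\left(\beta - \alpha\right)$, which is the claimed bound.

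The argument is essentially immediate once the reflection $\mathbf{w} = \beta - \mathbf{v}$ is in hand, so there is no genuinely hard step. The only points requiring care are ensuring that the threshold $\beta - \alpha$ is strictly positive (guaranteed by the hypothesis $\alpha < \beta$), so that Markov's inequality is applicable, and confirming that $\mathbf{w} \geq 0$ almost surely (guaranteed by $\mathbf{v} \leq \beta$), so that the inequality holds without any further assumptions on the distribution of $\mathbf{v}$.
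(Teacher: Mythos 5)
Your proof is correct and follows essentially the same route as the paper's: both define the reflected variable $\beta - \mathbf{v}$, rewrite the event $\{\mathbf{v} \leq \alpha\}$ as $\{\beta - \mathbf{v} \geq \beta - \alpha\}$, and apply Markov's inequality before substituting $\EX(\beta - \mathbf{v}) = \beta - \EX(\mathbf{v})$. If anything, yours is slightly more careful, since you explicitly verify the nonnegativity of $\beta - \mathbf{v}$ and the positivity of the threshold $\beta - \alpha$, which the paper leaves implicit.
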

\vspace{-0.7cm}
\begin{proof}
From Markov's inequality, for all $\alpha >0$, we have
\begin{equation*}
P(\mathbf{v} \geq \alpha) \leq \frac{\EX(\mathbf{v})}{\alpha}
\end{equation*}
\noindent Let us now define $\tilde{\mathbf{v}} = \beta - \mathbf{v}$ such that $P({\mathbf{v}} \leq \beta) = 1$. Thus, we have
\begin{equation}
 P({\mathbf{v}} \leq \alpha) = P(\beta - \tilde{\mathbf{v}} \leq \alpha ) = P( \tilde{\mathbf{v}} \geq \beta - \alpha)
\end{equation} 
\noindent Applying Markov's inequality to $\tilde{\mathbf{v}}$, we get
\begin{equation}
 P( \tilde{\mathbf{v}} \geq \beta - \alpha) \leq \frac{\EX(\tilde{\mathbf{v}})}{\beta - \alpha} = \frac{\beta - \EX(\mathbf{v})}{\beta - \alpha}
\end{equation}
\vspace{-0.3cm}
\end{proof}

\noindent Using the above lemma, an upper bound for the collision probability is obtained as
\begin{equation}
P\left(\B{y}^TA\B{y} \leq 1/\lambda_0^2(M')\right) \leq \frac{\beta - \EX(\B{y}^TA\B{y})}{\beta - 1/\lambda_0^2(M')}
\label{eq:cp_upper_bound}
\end{equation}
The expectation in~(\ref{eq:cp_upper_bound}) is computed using the fact that for $\B{x} \sim \mathcal{N}(\bm{\mu},\Sigma)$, $\EX(\mathbf{x}^TA\mathbf{x}) = tr(A\Sigma) + \bm{\mu}^TA\bm{\mu}$.

\noindent \textit{Computing $\beta$:} For a symmetric matrix $A \in \mathcal{R}^{n \times n}$ and a random vector $\B{x}$, we have $\B{x}^T A \B{x} \leq \lambda_{max}\norm{\B{x}}^2$, where $\lambda_{max}$ is the maximum eigenvalue of $A$. Thus if $\B{x}$ lies within a unit sphere, the quadratic form is bounded by $\lambda_{max}$. Note that in our case $\B{x}$ is a random vector which is obtained from the difference between robot and obstacle locations. Since the environment in which the robot operates is bounded, without any loss of generality we can assume that $\B{x}$ lies within a sphere of certain radius $\kappa$\footnote{This means truncating $\B{x}$ at the boundary of the workspace. However, we assume the workspace to be large enough so that the truncated $\B{x}$ is approximately Gaussian distributed.}. Thus, $\beta = \max(\B{x}^T A \B{x}) = \lambda_{max} \kappa^2$. Since $\B{x}^T A \B{x}$ is a real number, we may thus write $\B{x} \in \left[\EX(\mathbf{x}^TA\mathbf{x})- \eta\sqrt{Var(\mathbf{x}^TA\mathbf{x})}, \EX(\mathbf{x}^TA\mathbf{x})+\eta\sqrt{Var(\mathbf{x}^TA\mathbf{x})}\right]$, where $\eta$ $\in$ $\mathbb{R}^+$. Based on several experimental analysis, $\eta$$=$$1$ provides a tighter bound and works well in practice in all our scenarios. We thus use $\beta = \EX(\mathbf{x}^TA\mathbf{x}) + \sqrt{Var(\mathbf{x}^TA\mathbf{x})}$.
\begin{definition}
A robot configuration $\B{x}_k$ is an $\epsilon-$safe configuration with respect to an obstacle configuration $\B{s}_k$, if the probability of collision is such that $P\left(\mathcal{C}_{\B{x}_k,\B{s}_k}\right) \leq \epsilon$.
\end{definition}
Let us consider the case where the ellipsoids enclose a robot currently at location $\B{x}_k$ and an obstacle at location $\B{s}_k$, respectively. Since the position of robots and obstacles are Gaussian distributed random variables, there exists a nonzero probability for both the robot and the obstacle to be found anywhere within the environment. Therefore, trajectories are computed such that the probability of collision with the obstacles is less than a specified bound. We thus look for robot positions $\B{x}_k$ such that the probability of collision is at most $\epsilon$, that is, $P\left(\mathcal{C}_{\B{x}_k,\B{s}_k}\right) \leq \epsilon$. From (\ref{eq:collision_prob}), we have
\begin{equation}
P\left(\B{y}^TA\B{y} \leq 1/\lambda_0^2(M')\right) \leq \epsilon
\label{eq:cp_bounded}
\end{equation}
\noindent Using Lemma~\ref{lemma_markov}, the bounded collision constraint is obtained 
\begin{equation}
\beta - \EX(\B{y}^TA\B{y})  \leq \epsilon\left(\beta -  1/\lambda_0^2(M')\right)
\label{eq:upper_bound}
\end{equation}
\subsection{Comparison to Other Methods}
We provide a comparison with several state-of-the-art methods using a robot and a close-by obstacle. The robot mean is located at $(0.95,0.95,0)$ with its L\"{o}wner-John ellipsoid semi-principle axes $(0.18,0.18,0.22)$ m and covariance $diag(0.41, 0.41,0.21)$ $\textrm{m}^2$. The obstacle is located at the origin with semi-principle axes $(0.6,0.6,1.2)$ m. The collision probability values can be seen in Table~\ref{table1}. Most approaches approximate an integral of the joint distribution between the robot and the obstacle to compute the collision probability. Given the current robot state $\B{x}_k$ and the obstacle state $\B{s}_k$, the collision probability is given by
\begin{equation}
P\left(\mathcal{C}_{\B{x}_k,\B{s}_k}\right) = \int_{\B{x}_k} \int_{\B{s}_k} I_c(\B{x}_k,\B{s}_k)p(\B{x}_k,\B{s}_k)
\label{eq:numerical}
\end{equation}
where $I_c$ is an indicator function defined as
\begin{equation}
   I_c(\B{x}_k,\B{s}_k)= 
   \begin{cases}
     1 \ &\text{if} \ \mathcal{R} \cap \mathcal{S} \neq \{\phi\} \\
     0 \ &\text{otherwise}.
   \end{cases}
\end{equation}
\noindent and $p(\B{x}_k,\B{s}_k)$ is the joint distribution of the robot and the obstacle. The numeric integral of~(\ref{eq:numerical}) gives the exact value and we compute the same to validate our approach. The numerical integral gives a collision probability value of 0.0568. If we define $\epsilon$$=$$0.09$, this configuration is $0.09-$safe or is feasible. Our method which computes the exact value using the cdf of the quadratic form gave a value of 0.0572. The value computed using the upper bound~(\ref{eq:cp_upper_bound}) is very
close to the actual value as seen from Table~\ref{table1}. The double summation of numerical integration is approximated to a single summation in~\cite{lambert2008ICCARV} and gives a feasible result. The approaches in~\cite{thomas2021ISR,thomas2020IRIM} also give feasible configurations. However, the values computed using~\cite{thomas2021ISR,lambert2008ICCARV,thomas2020IRIM} are greater
than the value computed using our upper bound method. Other approaches compute loose bounds and hence determine the configuration infeasible. Our approach thus computes a tighter bound.
\begin{table}
\small\sf\centering
 \caption{Comparison of collision probability methods}
\scalebox{0.68}{
\begin{tabular}{ |c|c|c|c| } 
 \hline
 Methods & Collision  & Computation  & Feasible ? \\
 & probability & time (s) & \\
 \hline 
 Numerical integral & 0.0568 & 4.3619 $\pm$ 1.1784 & Yes \\ 
  \hline 
   Approximate Numerical integral~\cite{lambert2008ICCARV} & 0.0773 & 1.7932 $\pm$ 0.1927 & Yes \\ 
  \hline 
 Bounding volume~\cite{park2012ICAPS,kamel2017IROS} & 1 & 0.0003 $\pm$ 0.0016  & No\\ 
 \hline
 Center point approximation~\cite{dutoit2011IEEE} & 0.1027  & 0.0004 $\pm$ 0.0001 & No \\
  \hline
 Maximum probability approximation~\cite{park2018IEEE}  & 0.7168  & 0.2288 $\pm$ 0.1626 & No  \\
  \hline
 Chance constraint~\cite{zhu2019RAL} & 0.1894 & 0.0013 $\pm$ 0.0000 & No\\
  \hline
 Rectangular bounding box~\cite{hardy2013TRO} & 0.1582 & 0.0056 $\pm$ 0.0006 & No\\
  \hline
 Sphere approximation~\cite{thomas2021ISR}  & 0.0898 & 0.0198 $\pm$ 0.0309 & Yes\\
  \hline
 Our approach-- exact & 0.0572 & 0.0044 $\pm$ 0.0042 & Yes\\
  \hline
 Our approach-- upper bound & 0.0660 & 0.0009 $\pm$ 0.0003 & Yes\\
 \hline
\end{tabular}}
 \label{table1}
\end{table}

\section{Planning under Uncertainty}
\subsection{Belief Dynamics}
We consider a Gaussian parametrization for the probability distribution over the robot/obstacle state which is known as the \textit{belief} state. Belief Space Planning (BSP) has been researched extensively in the past and a comprehensive treatment can be found in~\cite{prentice2009IJRR, van_den_berg2012IJRR, agha_mohammadi2014IJRR, pathak2017ICRA}. The belief state $\B{b}(\B{x}_k) \sim \mathcal{N}(\bm{\mu}_k, \Sigma_k)$ is a Gaussian distribution with mean $\bm{\mu}_k$ and covariance $\Sigma_k$ we denote the belief state by a vector
\begin{equation}
\B{b}(\B{x}_k) = \left[ \bm{\mu}_k^T, \B{s}_k^T \right]^T
\end{equation}
\noindent where $\B{s}_k^T = [s_{k_1}^T, \ldots, s_{k_n}^T]$ is vector composed of the $n$ columns of $\Sigma_k$. Equivalently, $\B{s}^T$ will also be denoted by $vec(\Sigma_k)$  To compute the belief dynamics, we use the technique of Extended Kalman Filter (EKF)
\begin{equation}
\bm{\mu}_{k} = \bm{\bar{\mu}}_k + K_k\left(\B{z}_k - h(\bm{\bar{\mu}}_k)\right)
\label{eq:stochastic}
\end{equation}
\vspace{-0.5cm}
\begin{equation}
\Sigma_k = \left(I - K_kH_k\right)\bar{\Sigma}_{k} 
\end{equation}
\noindent where $\bm{\bar{\mu}}_k = f(\bm{\mu}_{k-1}, \B{u}_{k-1})$, $\bar{\Sigma}_{k} = F_{k-1} \Sigma_{k-1} F_{k-1}^T + R_{k-1}$ and $ K_k   = \bar{\Sigma}_{k} H_k^T\left(H_k \bar{\Sigma}_{k}  H_k^T + Q_k\right)^{-1}$, with $H_k$ being the Jacobian of $h(\cdot)$ with respect to $\B{x}_k$ and $F_{k-1}$ is the Jacobian of $f(\cdot)$ with respect to $\B{x}_{k-1}$. The second term in~(\ref{eq:stochastic}) depends on the measurement $\B{z}_k$ and is often referred to as the \textit{innovations process}. Since future observations are unknown at the planning time, the innovations process is stochastic. As a result the belief state dynamics is stochastic in nature. 
\begin{theorem}
The innovations process is a zero-mean Gaussian white noise sequence with
\label{th:innnovation}
\begin{equation}
\EX\left( \left[\B{z}_k - h(\bm{\bar{\mu}}_k)\right] \left[\B{z}_k - h(\bm{\bar{\mu}}_k)\right]^T \right) = H_k \bar{\Sigma}_{k}  H_k^T + Q_k
\label{eq:innovations}
\end{equation}
\end{theorem}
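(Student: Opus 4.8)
The plan is to treat this as the standard innovations result for the (extended) Kalman filter, exploiting the linearization of the observation model~(\ref{eq:measurement_model}) that is already implicit in the EKF update. First I would define the innovation $\tilde{\B{z}}_k \doteq \B{z}_k - h(\bm{\bar{\mu}}_k)$ and linearize $h$ about the predicted mean, writing $h(\B{x}_k) \approx h(\bm{\bar{\mu}}_k) + H_k(\B{x}_k - \bm{\bar{\mu}}_k)$ with $H_k$ the Jacobian of $h$ with respect to $\B{x}_k$. Substituting the measurement model $\B{z}_k = h(\B{x}_k) + v_k$ yields $\tilde{\B{z}}_k \approx H_k \tilde{\B{x}}_k + v_k$, where $\tilde{\B{x}}_k \doteq \B{x}_k - \bm{\bar{\mu}}_k$ is the prior prediction error. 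This reduces the whole statement to elementary properties of the two zero-mean, mutually independent terms $H_k\tilde{\B{x}}_k$ and $v_k$.

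The zero-mean, covariance, and Gaussianity claims then follow by direct computation. Since $\bm{\bar{\mu}}_k$ is by construction the predicted mean of $\B{x}_k$ we have $\EX(\tilde{\B{x}}_k) = 0$, and $v_k$ is zero-mean by~(\ref{eq:measurement_model}), so $\EX(\tilde{\B{z}}_k) = 0$. For the covariance I would expand $\EX(\tilde{\B{z}}_k \tilde{\B{z}}_k^T)$ into four terms, using $\EX(\tilde{\B{x}}_k \tilde{\B{x}}_k^T) = \bar{\Sigma}_{k}$ (the prior covariance) and $\EX(v_k v_k^T) = Q_k$. The two cross terms $H_k \EX(\tilde{\B{x}}_k v_k^T)$ and its transpose vanish because the sensing noise $v_k$ is independent of the state and of the process noise, hence of $\tilde{\B{x}}_k$; what survives is exactly $H_k \bar{\Sigma}_{k} H_k^T + Q_k$, establishing~(\ref{eq:innovations}). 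Gaussianity is then immediate, since $\tilde{\B{z}}_k$ is an affine function of the jointly Gaussian pair $(\tilde{\B{x}}_k, v_k)$.

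The genuinely non-trivial part is the \emph{whiteness} claim, i.e. that $\EX(\tilde{\B{z}}_k \tilde{\B{z}}_j^T) = 0$ for $k \neq j$, and I expect this to be the main obstacle, since it is not a one-step algebraic identity but a consequence of the recursive structure of the filter. The approach I would take is the orthogonality principle: the gain $K_k$ is chosen precisely so that the posterior error $\B{x}_k - \bm{\mu}_k$ is uncorrelated with the innovation $\tilde{\B{z}}_k$, and by induction with all past innovations. One then propagates this through the prediction step to show that $\tilde{\B{x}}_k$, and therefore $\tilde{\B{z}}_k$, is uncorrelated with $\tilde{\B{z}}_j$ for every $j < k$, with symmetry handling $j > k$. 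Carrying this out rigorously requires an inductive argument over the filtering recursion together with the independence of the noise sequences $\{n_k\}$ and $\{v_k\}$ across time. In the nonlinear setting the identity holds under the usual first-order linearization, so I would state and argue it at the level of the linearized error dynamics rather than grinding through the full nonlinear bookkeeping.
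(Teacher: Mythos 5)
Your proposal is correct, but there is nothing in the paper to compare it against: the paper's entire ``proof'' of Theorem~\ref{th:innnovation} is a citation to Mendel's estimation-theory text (page 234), where this classical innovations result is established. What you have written out is, in substance, that standard textbook argument: the decomposition $\tilde{\B{z}}_k \approx H_k\tilde{\B{x}}_k + v_k$ under first-order linearization of $h$ about $\bm{\bar{\mu}}_k$, vanishing cross terms (by independence of $v_k$ from the prediction error) giving the covariance $H_k\bar{\Sigma}_k H_k^T + Q_k$, Gaussianity as an affine image of jointly Gaussian variables, and whiteness via the orthogonality principle with induction over the filter recursion. Two remarks. First, you correctly identify whiteness as the only non-trivial claim: the one-step covariance computation alone does not prove the theorem as stated, so the inductive orthogonality argument you sketch is genuinely needed and must be carried through (the gain $K_j$ making the posterior error uncorrelated with $\tilde{\B{z}}_j$, then propagating through the prediction step). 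Second, your caveat about linearization matters and is silently absorbed by the paper's citation: in the EKF setting the innovations are zero-mean, white, and Gaussian only to first order (exactly so in the linear-Gaussian case), so the theorem holds under the same approximation that justifies the EKF itself. Written out in full, your argument would make the paper self-contained precisely where it currently defers to the literature.
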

\begin{proof}
The proof may be found in~\cite{mendel1995estimation}, page 234.
\end{proof}

\noindent From Theorem~\ref{th:innnovation}, the stochastic belief state dynamics can be written as
\begin{equation}
\resizebox{.89\hsize}{!}{$
\B{b}(\B{x}_k)  = g\left( \B{b}(\B{x}_{k-1}), \B{u}_{k-1} \right) + W\left( \B{b}(\B{x}_{k-1}), \B{u}_{k-1} \right)w_{k-1}$}
\end{equation}

\noindent where
\begin{equation}
\resizebox{.89\hsize}{!}{$
g\left( \B{b}(\B{x}_{k-1}), \B{u}_{k-1} \right) = \begin{bmatrix}
\bm{\bar{\mu}}_k \\
\B{s}_k^T
\end{bmatrix} =
\begin{bmatrix}
f(\bm{\mu}_{k-1}, \B{u}_{k-1}) \\
vec\left(\left(I - K_kH_k\right)\bar{\Sigma}_{k}\right)
\end{bmatrix}$}
\end{equation}
\begin{equation}
W\left( \B{b}(\B{x}_{k-1}), \B{u}_{k-1} \right) = \begin{bmatrix}
K_k\\
\textbf{0}
\end{bmatrix}
\end{equation}
\begin{equation}
w_{k-1} \sim \mathcal{N}( \textbf{0}, H_k \bar{\Sigma}_{k}  H_k^T + Q_k)
\end{equation}

\noindent Thus the innovation term $K_k(\B{z}_k - h(\bm{\bar{\mu}}_k))$ is distributed according to 
\begin{equation}
\mathcal{N}(\textbf{0}, K_k(H_k \bar{\Sigma}_{k}  H_k^T + Q_k)K_k^T)
\end{equation}
The maximum likelihood observation at time $k$ is $\B{z}_k = h(\bar{\bm{\mu}}_k)$, which reduces the innovation term in~(\ref{eq:stochastic}) to zero and thus eliminating the stochasticity from the belief state dynamics. The assumption of maximum likelihood observation was first relaxed in~\cite{van_den_berg2012IJRR}. However, a first-order approximation is assumed rendering the innovation term to be distributed according to $\mathcal{N}(\textbf{0}, K_kH_k\bar{\Sigma}_{k})$. Other approaches that relax this assumption either simulate future measurements or treat them as random variables~\cite{indelman2015IJRR,pathak2018IJRR, thomas2019ISRR}. 
\subsection{Objective Function}
We formulate the collision avoidance problem in BSP as an optimization problem. At each time instant $k$, the robot plans for $L$ look-ahead steps minimizing an objective function $J_k = \underset{\B{z}_{k+1:k+L}}{ \EX} \left[ \sum\limits_{l=0}^{L-1} c_l(\B{b}(\B{x}_{k+l}),\B{u}_{k+l}) + c_L(\B{b}(\B{x}_{k+L}))\right]$, subject to certain constraints, with $c_l$ being the cost for each look-ahead step and $c_L$ the terminal cost. Since future observations are not available at planning time and are stochastic, the expectation is taken to account for all possible future observations. At each time step, the robot is required to minimize its control usage and proceed towards the goal $\B{x}^g$ avoiding collisions. As a result, we have the following immediate and terminal costs
\begin{eqnarray}
c_l(\B{b}(\B{x}_{k+l}),\B{u}_{k+l}) = \norm{\xi(\B{u}_{k+l})}^2_{M_u} \\
 c_L(\B{b}(\B{x}_{k+L})) = \norm{\B{x}_{k+L} - \B{x}^g}^2_{M_g} 
 \end{eqnarray}
\noindent where $\norm{x}_S = \sqrt{x^TSx}$ is the Mahalanobis norm, $M_u, M_g$ are weight matrices and $\xi(\cdot)$ is a function that quantifies control usage. $J_k$ can now be explicitly written as
\begin{equation}
\begin{split}
J_k & = \underset{\B{z}_{k+1:k+L}}{ \EX} \left[ \sum_{l=0}^{L-1}  \norm{\xi(\B{u}_{k+l})}^2_{M_u} +  \norm{\B{x}_{k+L} - \B{x}^g}^2_{M_g} \right]\\
  & = \sum_{l=0}^{L-1}  \norm{\xi(\B{u}_{k+l})}^2_{M_u} +  \underset{\B{z}_{k+L}}{ \EX} \left[ \norm{\B{x}_{k+L} - \B{x}^g}^2_{M_g} \right]
   \end{split}
 \label{eq:objective_fn}
\end{equation}  
\noindent The expectation is discarded from the first term as it does not depend on the future observations. Let us now proceed by evaluating the term with expectation. Using~(\ref{eq:stochastic}), we have
\begin{multline}
\resizebox{.96\hsize}{!}{$
\norm{\B{x}_{k+L} - \B{x}^g}^2_{M_g} = \norm{\bm{\bar{\mu}}_{k+L} + K_{k+L}\left(\B{z}_{k+L} - h(\bm{\bar{\mu}}_k)\right) - \B{x}^g}^2_{M_g} $}\\
= \norm{\bm{\bar{\mu}}_{k+L} - \B{x}^g + K_{k+L}\left(\B{z}_{k+L} - h(\bm{\bar{\mu}}_k)\right) }^2_{M_g}\\
= \norm{\bm{\bar{\mu}}_{k+L} - \B{x}^g}^2_{M_g} + \norm{K_{k+L}\left(\B{z}_{k+L} - h(\bm{\bar{\mu}}_k)\right)}^2_{M_g} \\ 
+\left(\bm{\bar{\mu}}_{k+L} - \B{x}^g\right)^TM_g\left(K_{k+L}\left(\B{z}_{k+L} - h(\bm{\bar{\mu}}_k)\right)\right)\\
 + \left(K_{k+L}\left(\B{z}_{k+L} - h(\bm{\bar{\mu}}_k)\right)\right)^TM_g\left(\bm{\bar{\mu}}_{k+L} - \B{x}^g\right)
\label{eq:norm_expand}
\end{multline}
\noindent For any random vector $\B{y}$ and a matrix $A$ of appropriate dimension, we have $\EX \left[ \B{y}^TA\B{y}\right] = tr\left( A \textrm{Var}(\B{y})\right) + \EX[\B{y}]^TA\EX[\B{y}]$. Exploiting this and the fact that $\EX \left[\left(\B{z}_{k+L} - h(\bm{\bar{\mu}}_k)\right) \right] = \B{0}$, the expression in~(\ref{eq:norm_expand}) simplifies to
\begin{multline}
\underset{\B{z}_{k+L}}{ \EX} \left[ \norm{\B{x}_{k+L} - \B{x}^g}^2_{M_g} \right]
=\\ \norm{\bm{\bar{\mu}}_{k+L} - \B{x}^g}^2_{M_g} +  tr\left( K_{k+L} \textrm{Var}\left(\B{z}_{k+L} - h(\bm{\bar{\mu}}_k)\right)\right)
\end{multline}
\noindent where the expression for the variance is obtained from~(\ref{eq:innovations}). 
The optimization problem can be formally stated now as
\vspace{-0.5cm}
\begin{mini!}|s|[1]
{ \B{b}_{k:k+L-1},  \B{u}_{k:k+L-1} }{J_k}{\label{eq:cost_fn}}{}
  \addConstraint{\B{b}(\B{x}_{k+l})}{=g( \B{b}(\B{x}_{k+l-1}), \B{u}_{k+l-1})}{\nonumber} 
  \addConstraint{+W( \B{b}(\B{x}_{k+l-1}), \B{u}_{k+l-1} )w_{k+l-1}}{}
  \addConstraint{\B{u}_{k+l}}{\in \B{U}}{\label{eq:controls}}
  \addConstraint{P\left(\mathcal{C}_{\B{x}_{k+l},\B{s}_{k+l}}\right)}{\leq \epsilon}{\label{eq:p_coll}}
\end{mini!}
\noindent where ~(\ref{eq:controls}) constraints the control input to lie within the feasible set of control inputs and~(\ref{eq:p_coll}) enforces $\epsilon-$safe configurations. We will now derive the expression for~(\ref{eq:p_coll}). Let the L\"{o}wner-John ellipsoids of the robot and the obstacle be $\mathcal{E}(X, \B{x}_{k+l})$ and $\mathcal{E}(S, \B{s}_{k+l})$, respectively. Using~(\ref{eq:upper_bound}), the constraint ~(\ref{eq:p_coll}) becomes 
\begin{multline}
\beta - \EX \left[ \left( \B{s}_{k+l} - \B{x}_{k+l} \right)^TA_{k+l}\left( \B{s}_{k+l} - \B{x}_{k+l} \right)\right]\\  \leq \epsilon\left(\beta -  1/\lambda_0^2(M'_{k+l})\right)
\label{eq:coll_expectation}
\end{multline} 
\noindent where $M'_{k+l}$ is computed using~(\ref{eq:mprime}) and $\lambda_0(M'_{k+l})$ is the minimal eigenvalue of $M'_{k+l}$. 


\section{Results}
\begin{figure}[t]
\vspace{-0.2cm}
  \subfloat{\includegraphics[scale=0.193]{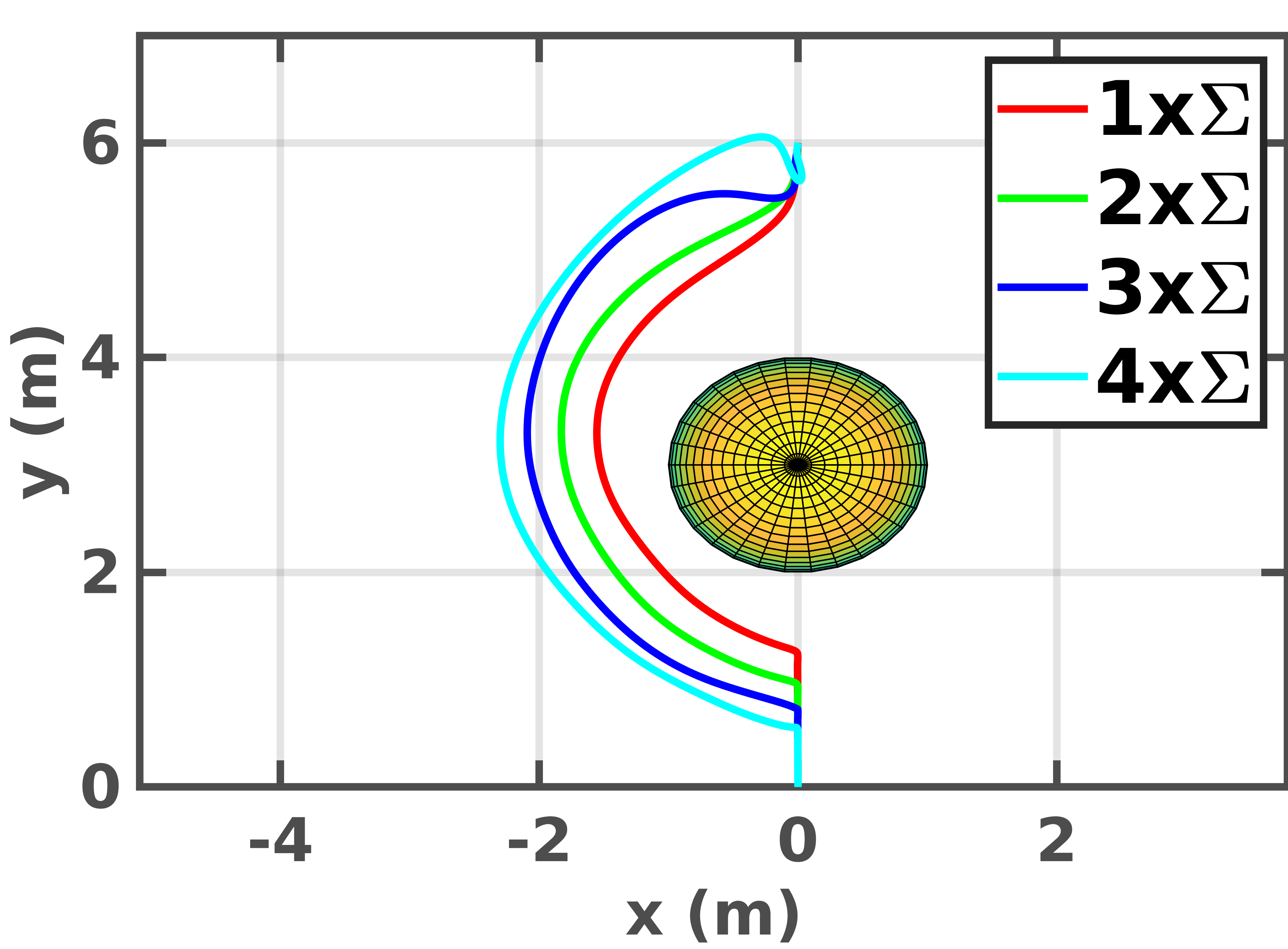}\label{fig:cp1}}\hfill
  \subfloat{\includegraphics[scale=0.193]{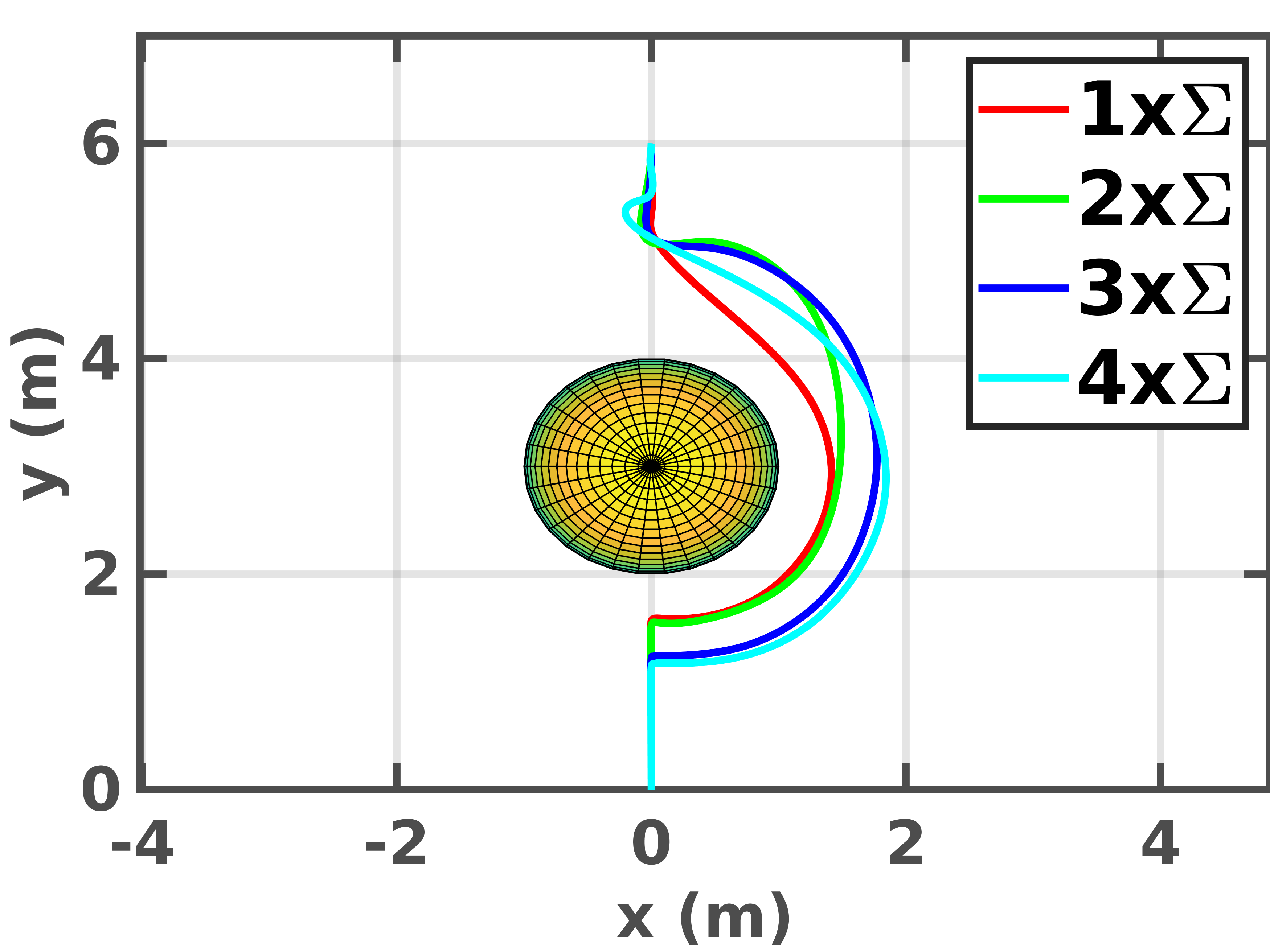}\label{fig:cp2}}\hfill
  \subfloat{\includegraphics[scale=0.183]{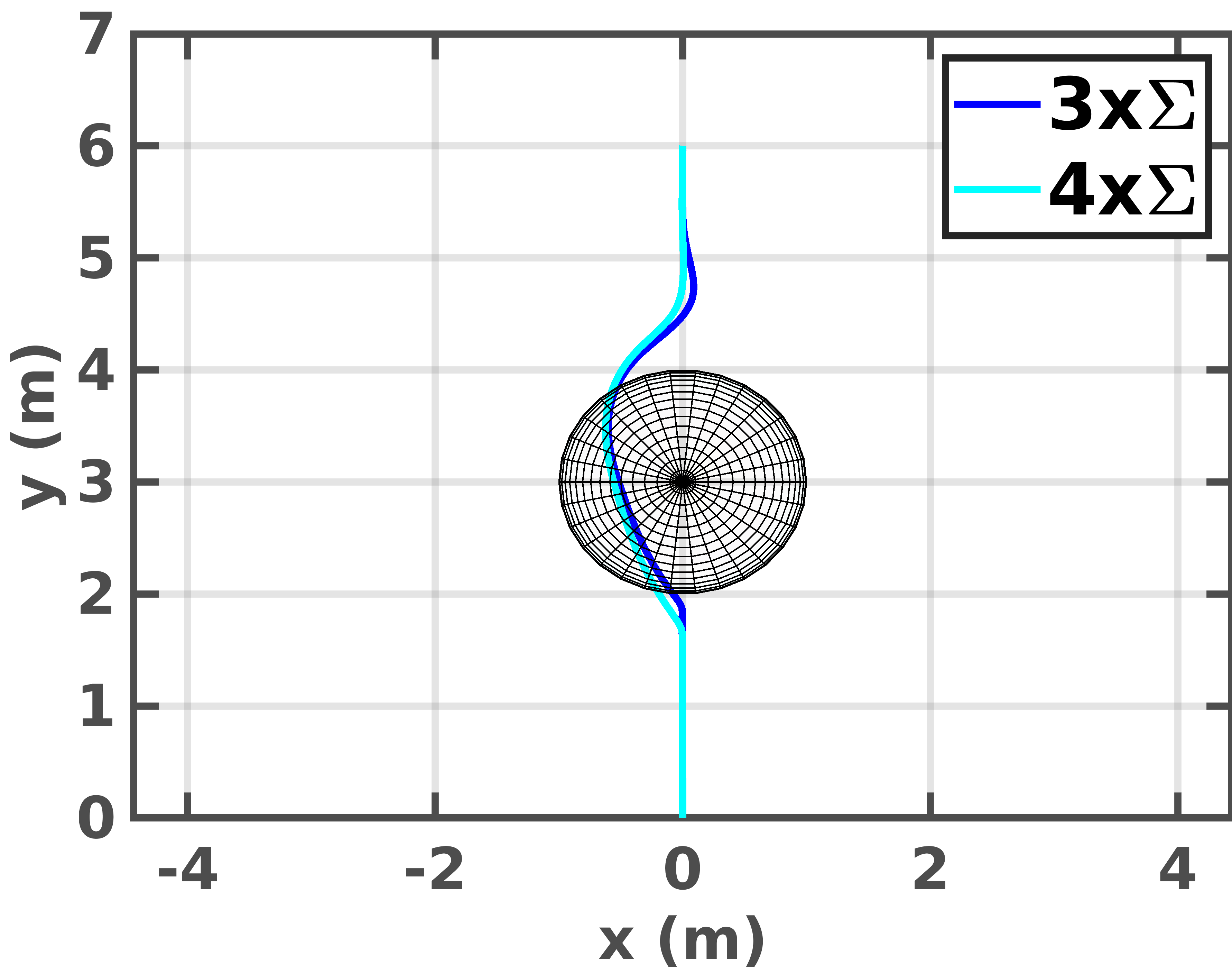}\label{fig:cp3}}
  \vspace{-0.3cm}
  \clearsubcaptcounter
  \subfloat[Bounding volume]{\includegraphics[scale=0.19]{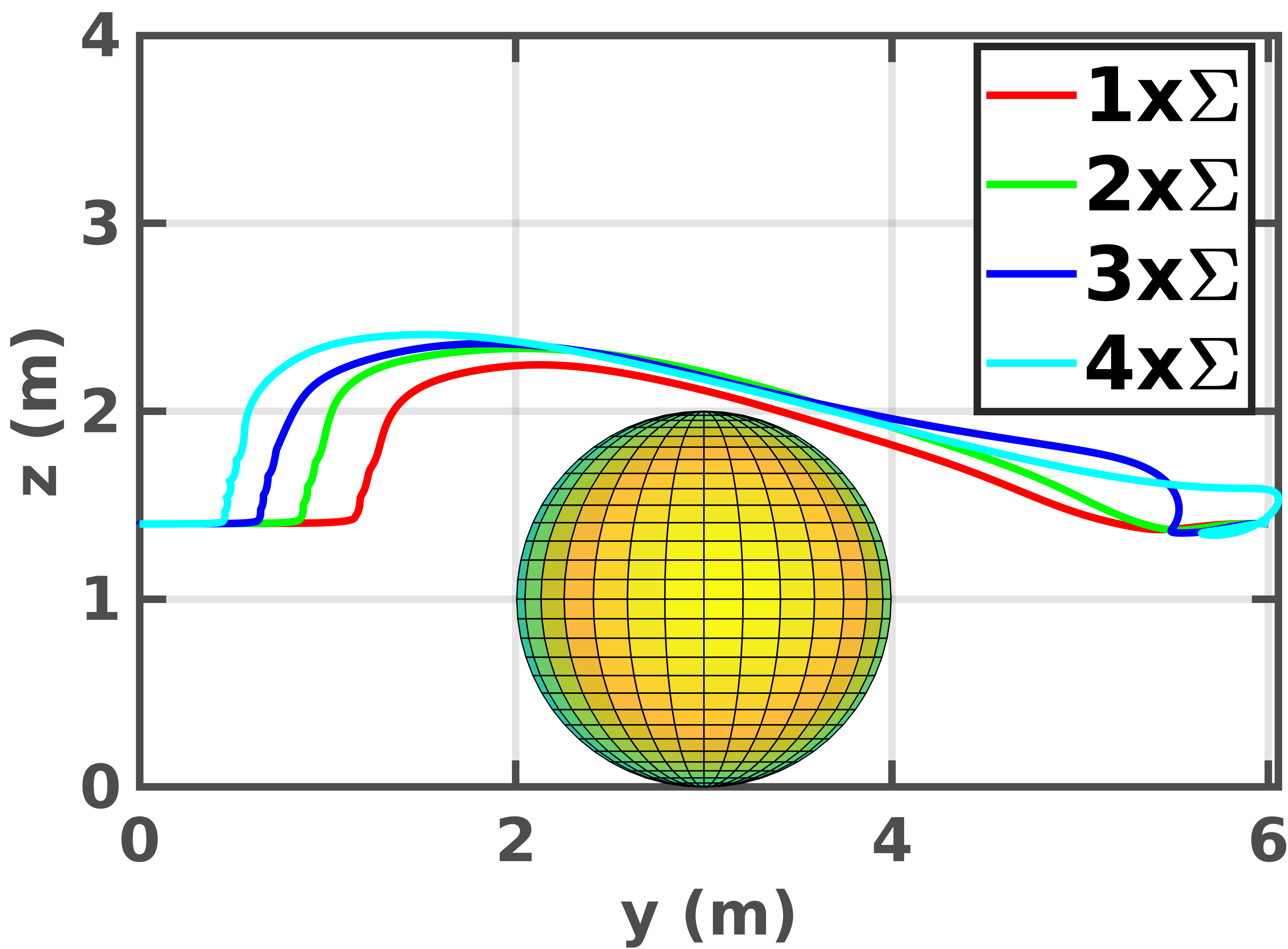}\label{fig:cp4}}\hfill
  \subfloat[Our method]{\includegraphics[scale=0.19]{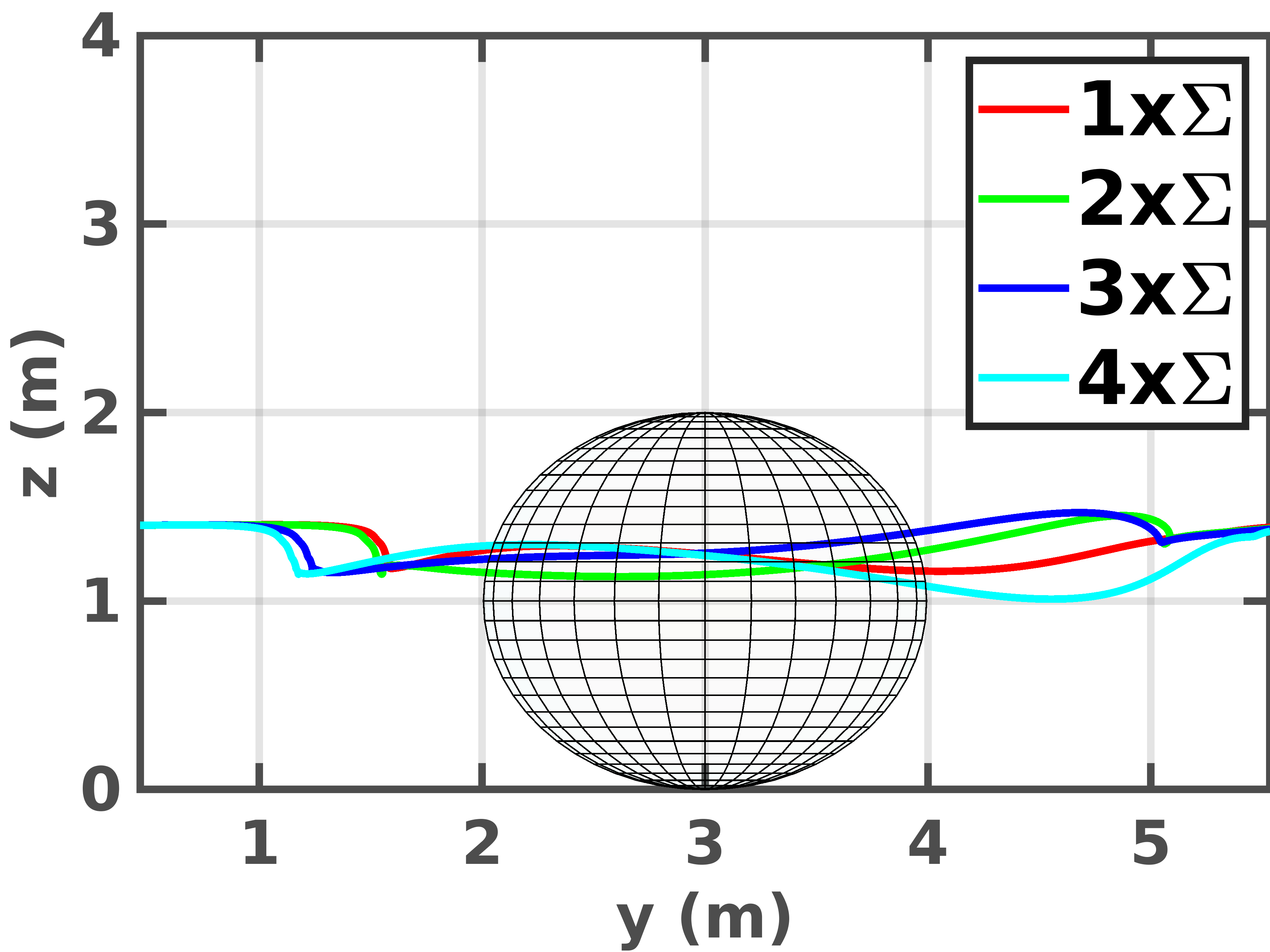}\label{fig:cp5}}\hfill
  \subfloat[Center point approximation]{\includegraphics[scale=0.19]{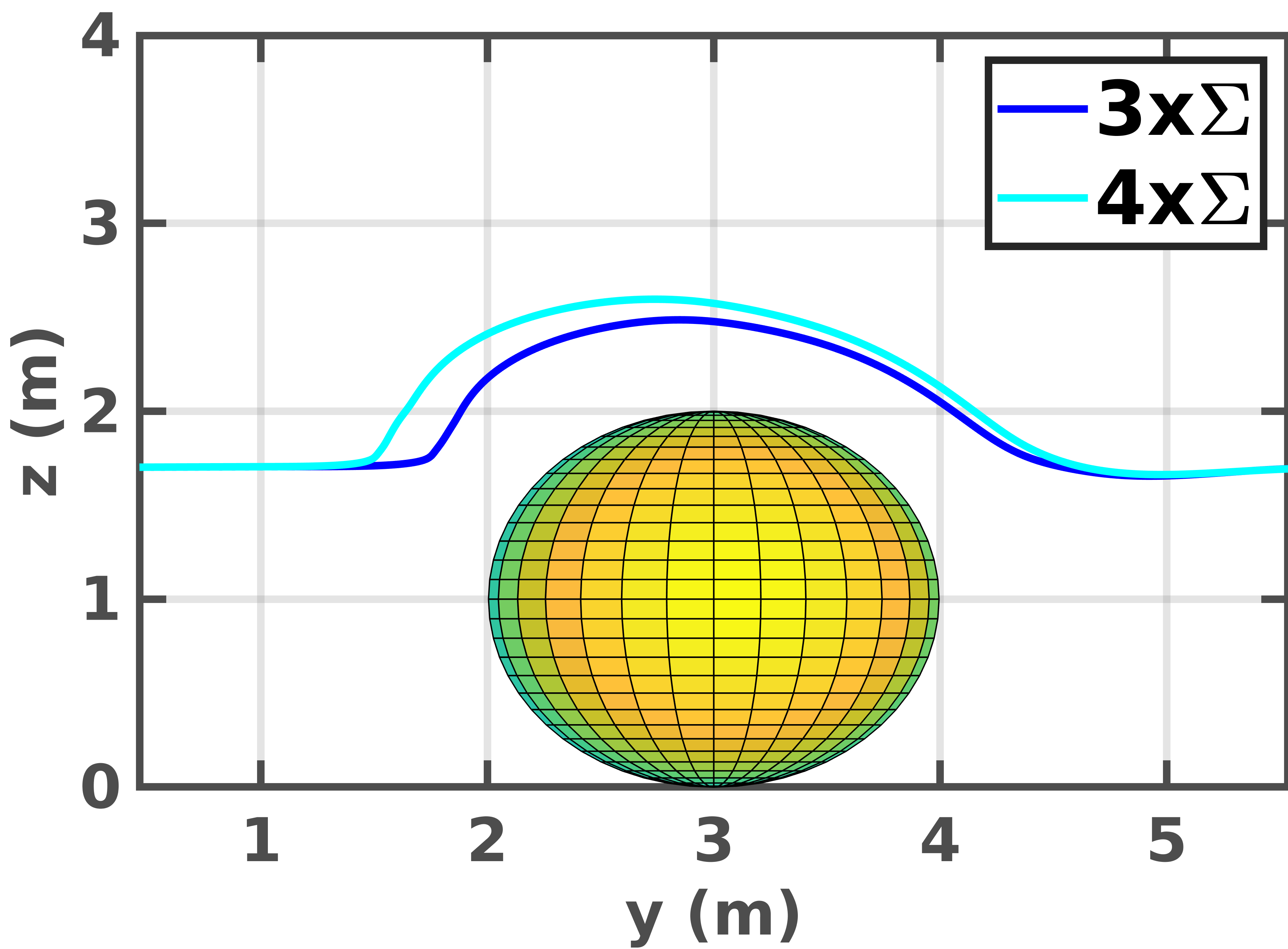}\label{fig:cp6}}
  \vspace{-0.1cm}
  \caption{Simulation with varying measurement noise. The upper plots shows the top view (x-y) and the lower plots show the side view (y-z). The solid lines represent the trajectories executed.}
  \label{fig:cp}
\end{figure}
In this section we describe our implementation and then evaluate the capabilities of our proposed approach. Simulations are performed in Gazebo with a quadrotor of semi-principle axes $(0.18,0.18,0.06)$ m. We refer the readers to~\cite{falanga2018IROS} for the quadrotor dynamics. The ground truth odometry from Gazebo is used to measure the robot pose, mimicking a motion capture system. This measurement is then corrupted with noise which is zero mean with covariance $\Sigma = diag(0.05 \textrm{m}^2, 0.05 \textrm{m}^2, 0.05 \textrm{m}^2, 0.1 \textrm{deg}^2, 0.1 \textrm{deg}^2,$ $0.1 \textrm{deg}^2)$. The optimization of~(\ref{eq:cost_fn}) is performed using the model predictive control approach developed in~\cite{falanga2018IROS} and modified to meet our requirements. In all the experiments we use $\epsilon = 0.05$ and a look-ahead horizon of two seconds with a discretization of 0.1 seconds. The performance is evaluated on an Intel{\small\textregistered} Core i7-6500U CPU$@$2.50GHz$\times$4 with 8GB RAM under Ubuntu 16.04 LTS. The results reported are averaged over 10 different simulation runs. 
\begin{table*}[t]
\small\sf\centering
 \caption{Collision probability efficiency with varying measurement noise. The minimum distance between the quadrotor and the obstacle is denoted by d (m). l (m) is the total trajectory length and T (s) is the total trajectory duration. sp denotes success percentage.}
 \scalebox{0.75}{
\begin{tabular}{|c|C{1.25cm}C{0.55cm}C{0.55cm}C{0.55cm}|C{1.25cm}C{0.55cm}C{0.55cm}C{0.55cm}|C{1.25cm}C{0.55cm}C{0.55cm}C{0.55cm}|C{1.25cm}C{0.55cm}C{0.55cm}C{0.55cm}|  }
\hline
 {} & \multicolumn{16}{|c|}{Measurement noise} \\\cline{2-17}
 {}  & \multicolumn{4}{|c|}{$\Sigma$} & \multicolumn{4}{|c|}{2$\Sigma$} & \multicolumn{4}{|c|}{3$\Sigma$} & \multicolumn{4}{|c|}{4$\Sigma$}\\
 \hline
 Method & d & l & T & sp & d & l & T & sp & d & l & T & sp&d & l &T & sp\\
 \hline
 Bounding volume~\cite{park2012ICAPS,kamel2017IROS} &0.70$\pm$0.07 & 7.72 &6.6 & 100& 0.93$\pm$0.09 & 8.15 & 6.19 &100 & 1.38$\pm$0.14& 8.78 & 6.54&100 &1.51$\pm$0.17 & 9.54 & 7.04&100 \\
 \hline
  Our method & 0.40$\pm$0.05 & 7.48 & 6.01 &100 & 0.45$\pm$0.06 & 8.05 & 6.13&100 & 0.76$\pm$0.07 & 8.21 & 6.19&100 & 0.79$\pm$0.09 & 8.37 & 6.43 &100\\
  \hline
 Center point approximation~\cite{dutoit2011IEEE} & - & - & -& 0& - & - &- &0& 0.10$\pm$0.02 & 7.02 & 6.01& 55  &0.13$\pm$0.02 & 7.10 & 6.08& 35 \\
 \hline
\end{tabular}}
 \label{table:result1}
\end{table*}

\noindent \textit{Comparison to bounding volume approaches}: We compare our approach to bounding volume methods~\cite{park2012ICAPS,kamel2017IROS} which enlarge robots and obstacles with their 3-$\sigma$ confidence ellipsoids. Computation time and complexity are greatly reduced with bounding volumes. However, plans tend to be overly conservative and suboptimal. In the experiment, we consider a quadrotor at $(0,0,1.4)$ m moving to the goal location at $(0,13,1.4)$ m. An obstacle of semi-principle axes $(1,1,1)$ m is located in between at $(0,3,3)$ m. We conduct the experiment with varying measurement noise of $\Sigma, 2\Sigma, 3\Sigma, \textrm{and }  4\Sigma$. To compare the efficiency of our approach we define the compute the following metrics: (a) d-- minimum distance between the quadrotor and the obstacle, (b) l-- total trajectory length, and (c) T-- total trajectory duration. 

The results can be seen in the first two rows of Table~\ref{table:result1}. In all the cases, our approach is more efficient as
can be seen from the shorter average trajectory length and duration, which is more evident as the measurement noise increases. We also provide a comparison to center point approximation~\cite{dutoit2011IEEE}, which is also computationally less intensive. However, as recognized in~\cite{park2018IEEE}, if the covariance is small, the approximated probability can be much smaller than the exact value. Moreover, the approach works well only when the sizes of objects are relatively very small compared with their position uncertainties~\cite{zhu2019RAL}. This is seen in the last row of Table~\ref{table:result1}. For measurement noise $\Sigma$ and $2\Sigma$ the approach resulted in collision for all the runs. For measurement noise $3\Sigma$ and $4\Sigma$, the approach succeeded in 55\% and 35\% of the runs, respectively. This reduced success percentages are due to lower values of the collision probabilities computed. The executed trajectories for all the three approaches in Table~\ref{table:result1} are shown in Fig.~\ref{fig:cp}. For a given $\epsilon$, the metric d allows us to define a measure of risk--- distance to closest obstacle. However, increasing $\epsilon$, increases risk as we solicit controls such that the collision probability is at
most $\epsilon$. For example, in the scenario considered in Fig. 1 (b), an $\epsilon \geq 0.2$ lead to collision in 80$\%$ of the experiments.

\begin{figure*}[]
\vspace{-0.2cm}
  \subfloat[]{\includegraphics[scale=0.23]{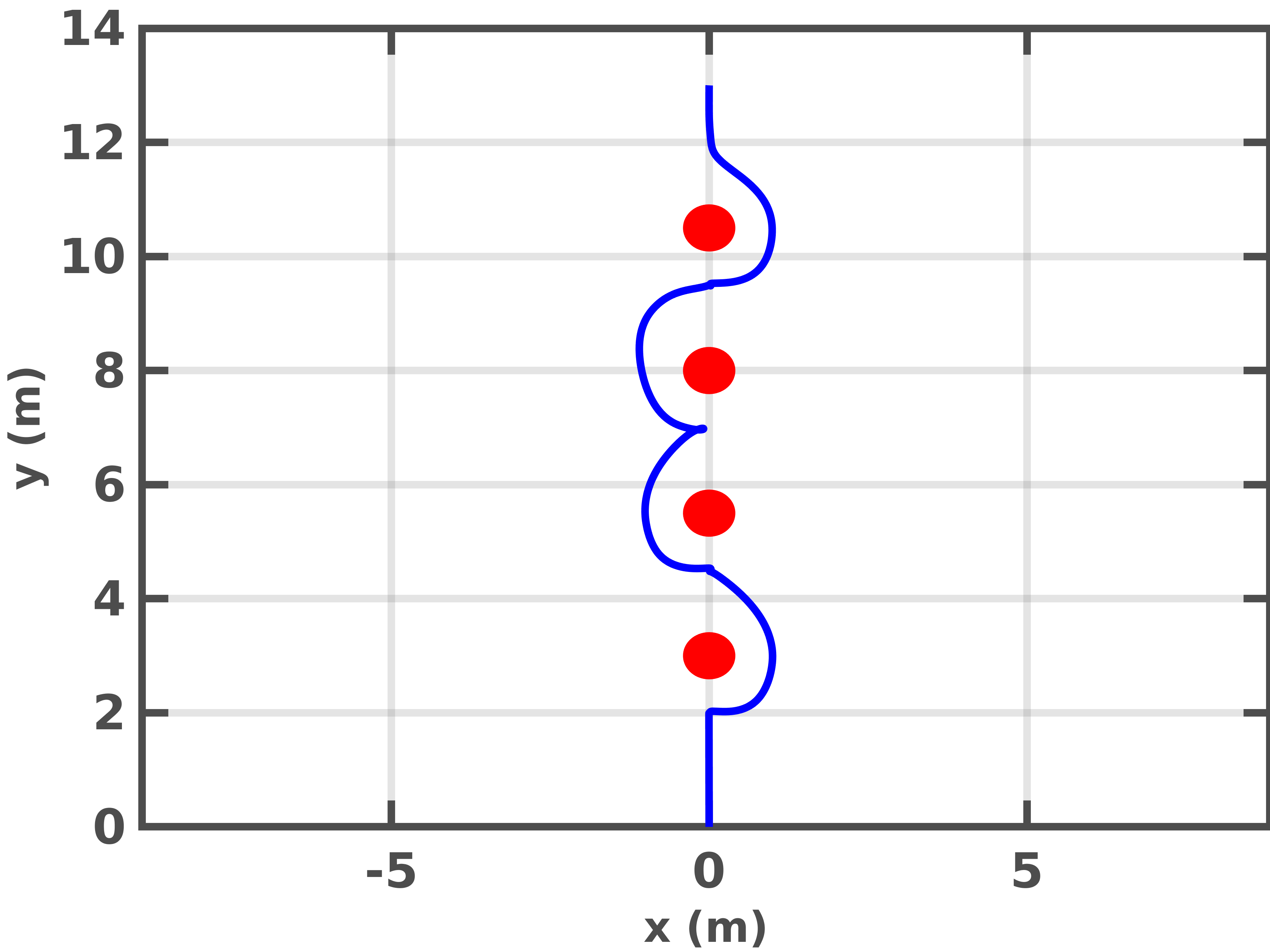}\label{fig:4obs1}}\hfill
  \subfloat[]{\includegraphics[scale=0.23]{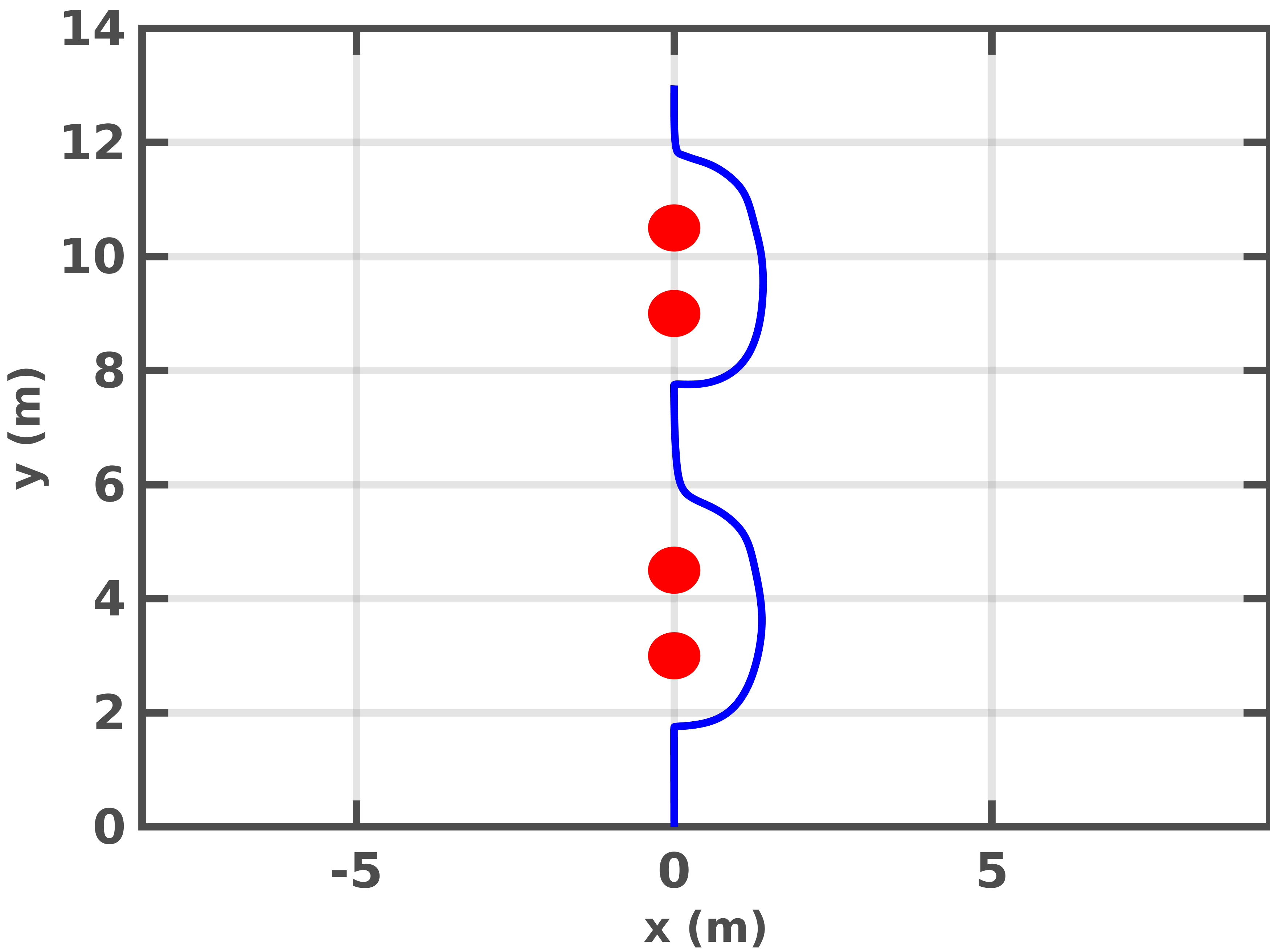}\label{fig:4obs2}}\hfill
  \subfloat[]{\includegraphics[scale=0.23]{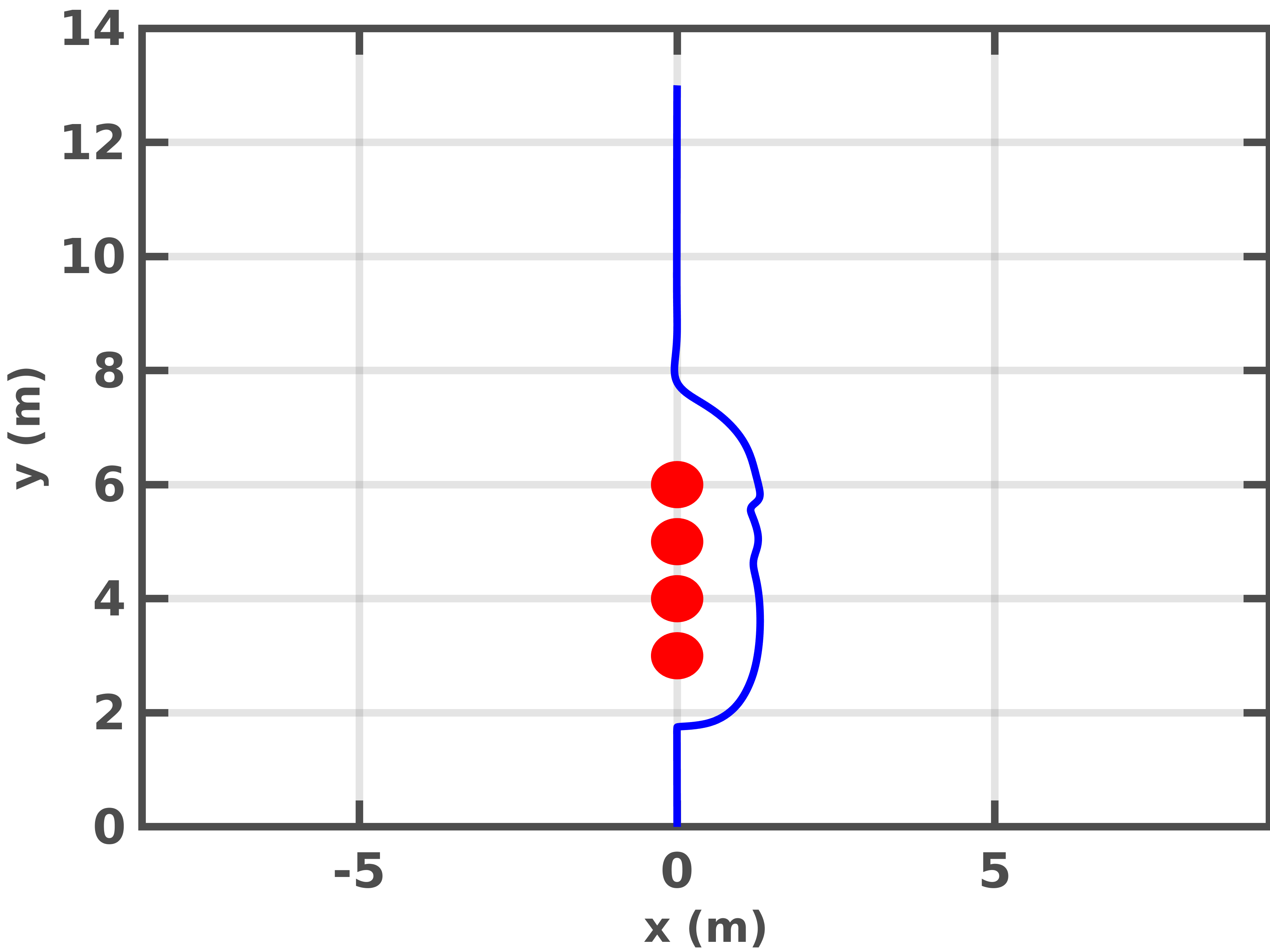}\label{fig:4obs3}}\hfill
  \subfloat[]{\includegraphics[scale=0.23]{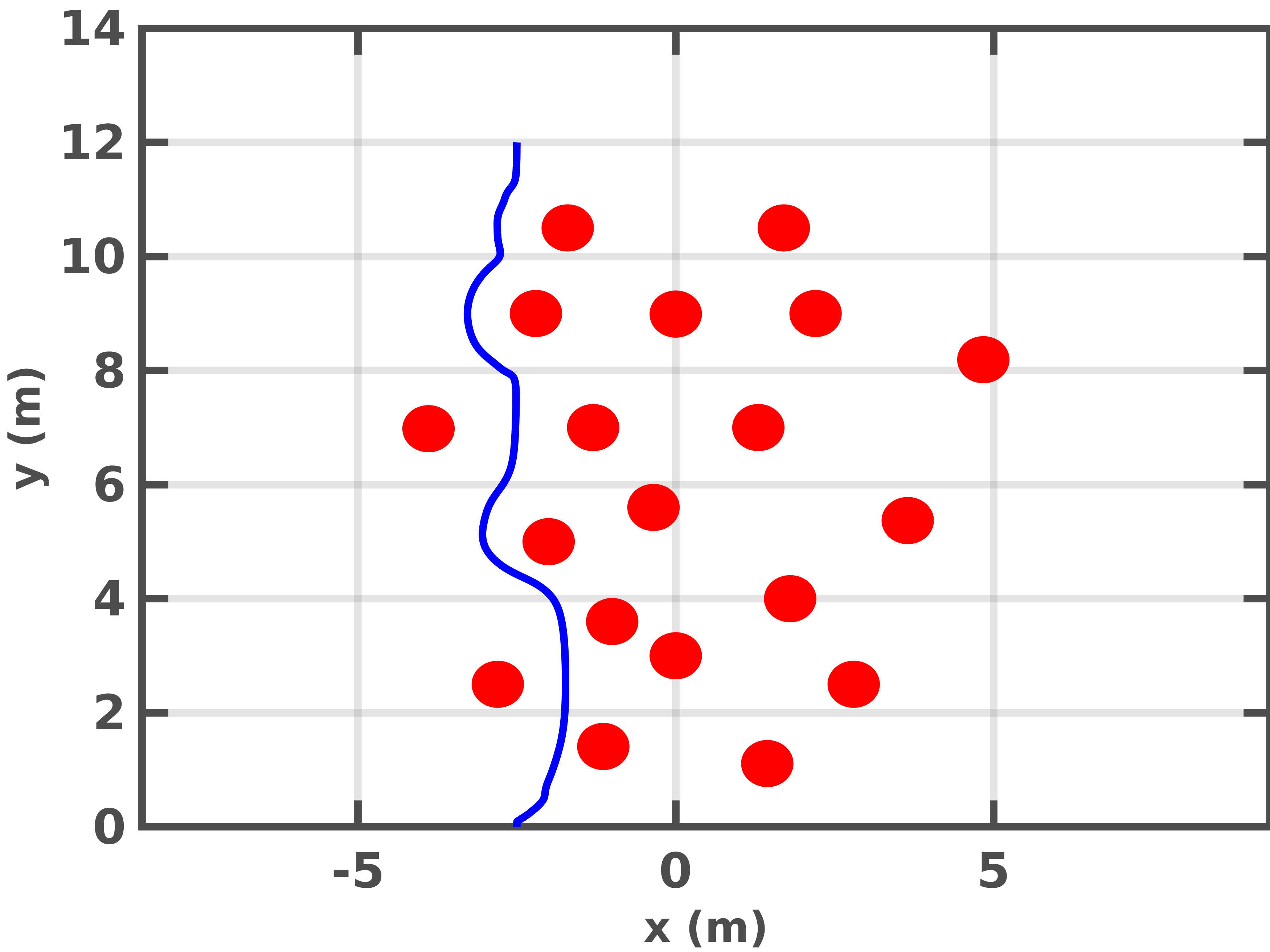}\label{fig:column_traj1}}\hfill
  \subfloat[]{\includegraphics[scale=0.23]{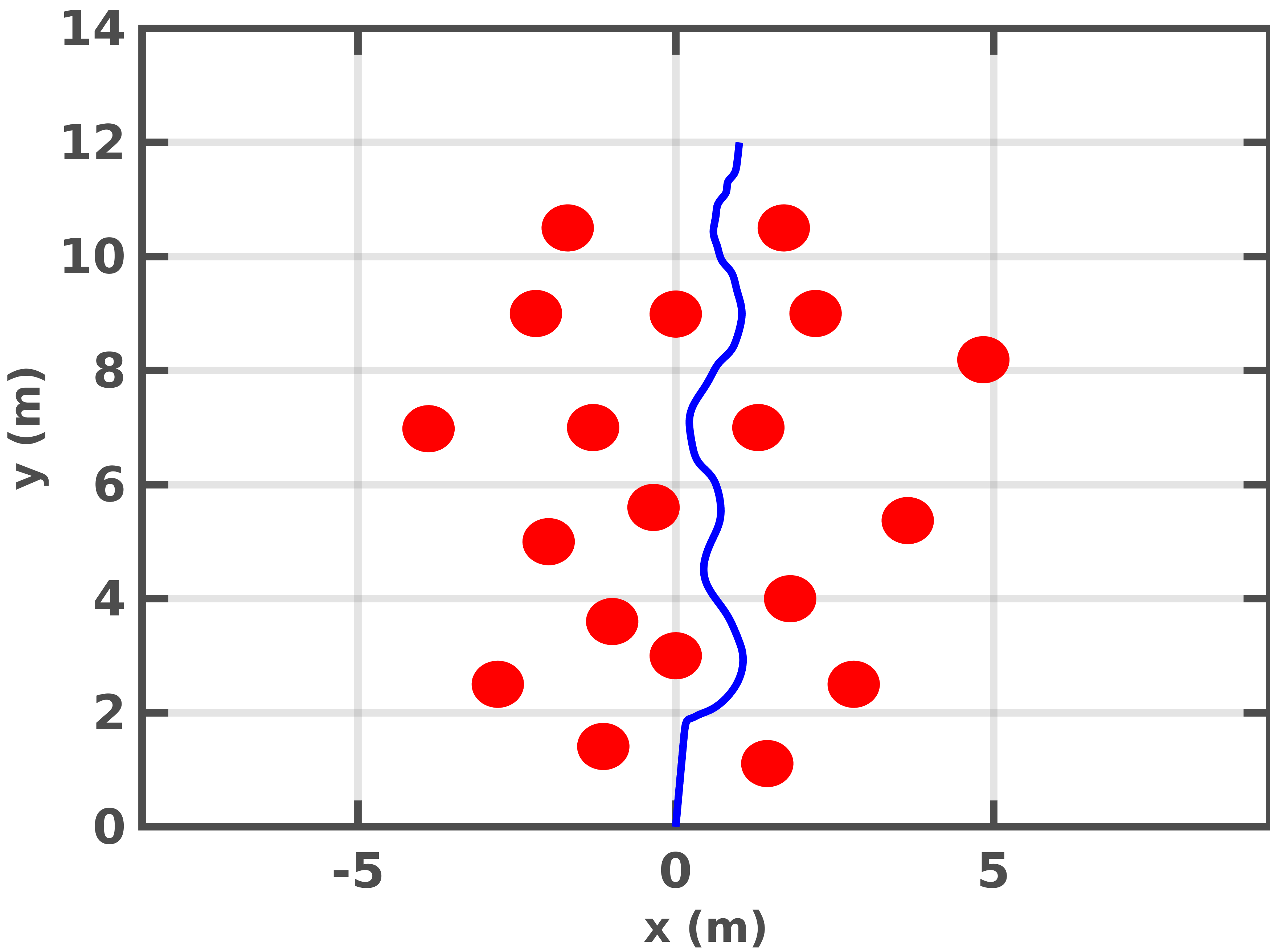}\label{fig:column_traj2}}
  \vspace{-0.1cm}
  \caption{(a)-(c) Top view (x-y) of \textit{four obstacles} in different locations. (d)-(e) Top view of the \textit{column domain}. The solid blue lines represent the trajectories executed by the quadrotor and the red blobs represent the obstacles.}
  \label{fig:4obs}
\end{figure*}
\begin{wraptable}[5]{l}{0.53\linewidth}
\small\sf\centering
\vspace{-0.4cm}
 \scalebox{0.64}{
\begin{tabular}{|c|c|c|c|}
\hline
Obstacle location & l (m) & T (s) & d (m) \\
\hline
(a) &17.78 & 14.94 $\pm$ 0.03 & 0.59\\
\hline
(b) & 16.10 & 14.54 $\pm$ 0.01 & 0.88\\
\hline
(c) & 14.57 & 14.36 $\pm$ 0.02 & 0.82\\
 \hline
\end{tabular}}
  \vspace{-0.2cm}
  \caption{Trajectory results with varying obstacle configurations.}
      \label{table:result2}
\end{wraptable} 
\noindent \textit{Four obstacles}: In this experiment, we consider four obstacles which are placed (a) far apart from each, (b) two obstacles are placed close to each other, and (c) all obstacles are placed close to each other. In each of the cases, the quadrotor starting from $(0,0,1.4)$ m has to reach the goal location at $(0,13,1.4)$ m. The respective trajectories in each of the three cases can be seen in Fig.~\ref{fig:4obs1}-\ref{fig:4obs3}. The change in configuration of the obstacles affects the collision probability computation which is reflected in the respective executed trajectories. The results are shown in Table~\ref{table:result2}.

 \begin{wrapfigure}[9]{l}{0.29\linewidth}
	\centering
	\vspace{-0.4cm}
		\includegraphics[scale=0.075]{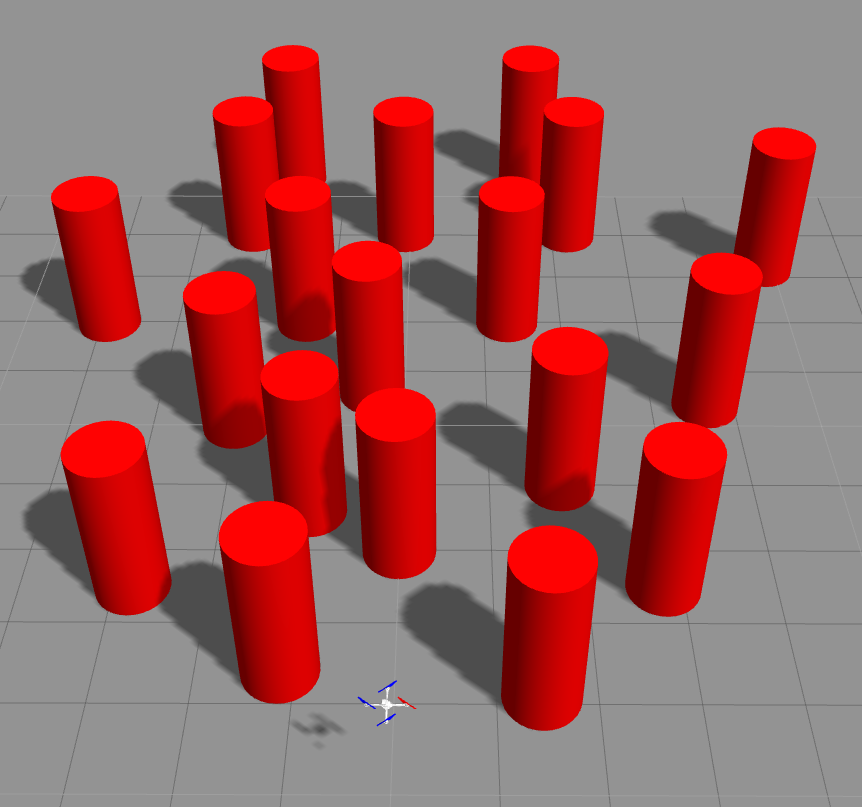}
		\caption{Environment with 19 cylindrical columns.}
	\label{fig:column}
\end{wrapfigure}
\noindent \textit{Column domain}: We test the adaptability of our approach to cluttered and challenging environments. Fig.~\ref{fig:column} shows a snapshot of the column domain which consists of 19 cylindrical columns. We consider two different scenarios where the quadrotor needs to avoid collision with the columns to reach the goal location. In scenario 1, the quadrotor starting from $(-2.5,0,1.4)$ m has to reach the goal location at $(-2.5,$$12,$$1.4)$ m. The trajectory followed is seen in Fig.~\ref{fig:column_traj1}, with an average trajectory length l=13.72 m, and trajectory time T=13.14 ($\pm$0.02) seconds. The average minimum distance between the quadrotor and the obstacles d=0.21 m. In scenario 2, the quadrotor starting from $(0,$$0,$$1.4)$ m has to reach the goal location at $(1,$$12,$$1.4)$ m. The trajectory followed is seen in Fig.~\ref{fig:column_traj2} with l=13.42 m, T=13.37 ($\pm$0.05) seconds and d=0.66 m. Note that since there are 19 columns, there are as many collision avoidance constraints. Our method is computationally less intensive and hence solvable in real time.

%

\section{Conclusion}
In this paper, we present a novel approach to compute an exact as well as a tighter bound for collision probability under Gaussian motion and sensing uncertainties. The collision constraint is formulated as the distance between L\"{o}wner-John ellipsoids of the robot and the obstacles. Efficiency of our approach with respect to trajectory length and duration is tested in simulation by comparing with bounding volume approaches. Trajectory variation due to change in obstacle configurations have also been tested and it is seen that our method readily adapts to the same. Finally, we also test our approach in a heavily cluttered column domain.

Our approach has several limitations. First, we assume a Gaussian parametrization for the belief states. This may not be an admissible approximation in some cases, for example, when obstacles are not known a-priori, one may need to consider multi-modal beliefs.  In such applications one may resort to nonlinear filters (say particle filters) for modeling non-Gaussian beliefs. Yet, in aerial as well as mobile robotics, EKF is extensively used for state estimation since Gaussian distributions are pertinent in a variety of applications. Gaussian distribution is thus a reasonable assumption in any such application. Second, we represent the robot and obstacles using their minimum volume enclosing ellipsoids. For a convex polygon $\mathcal{P} \subseteq \mathbb{R}^n$, $J(\mathcal{P})$ provide an \textit{n-rounding} of $\mathcal{P}$, that is, $n^{-1}J(\mathcal{P}) \subset \mathcal{P} \subset J(\mathcal{P})$; $n^{-\frac{1}{2}}$ for symmetric $\mathcal{P}$. Although the tightness of the bound is still an open problem, for a large class of problems, the approximation of $\mathcal{P}$ using $J(\mathcal{P})$ gives a reasonable first (and sometimes optimal) bound~\cite{giannopoulos2001euclidean}.

\section*{ACKNOWLEDGMENT}
The authors are grateful to the reviewers for their helpful suggestions.

\vspace{-0.1cm}
\bibliographystyle{IEEEtran.bst}
\bibliography{/home/antony/research_genoa/References/References}

\end{document}